\journal{Games and Economic Behavior}
\def\ps@pprintTitle{%
  \let\@oddhead\@empty
  \let\@evenhead\@empty
  \let\@oddfoot\@empty
  \let\@evenfoot\@oddfoot
}
\newcommand{\Unif}{\operatorname{Unif}}
\newcommand{\1}{\mathds{1}}
\newcommand{\N}{\mathbb{N}}
\newcommand{\var}{\operatorname{var}}
\newcommand{\argmax}{\operatorname{arg\,max}}
\newcommand{\R}{\mathbb{R}}
\newcommand{\E}{\mathbb{E}}
\newcommand{\Exp}{\operatorname{Exp}}
\newcommand{\dd}{\,\mathrm d}
\renewcommand{\P}{\mathbb{P}}
\renewcommand\appendixautorefname[1]{}
\pgfplotsset{compat=newest}
\newtheorem{definition}{Definition}[]
\newtheorem{lemma}{Lemma}[]
\newtheorem{proposition}{Proposition}[]
\newtheorem{corollary}{Corollary}[]
\newtheorem{theorem}{Theorem}[]
\begin{document}
\begin{frontmatter}
\title{Risk Preferences of Learning Algorithms\tnoteref{label1}}
\tnotetext[label1]{We thank seminar audiences at Harvard and MIT and the mathoverflow.com user fedja for helpful conversations.}
\author{Andreas Haupt}
\ead{haupt@mit.edu}
\ead[url]{https://www.andyhaupt.com/}
\author{Aroon Narayanan}
\ead{aroon@mit.edu}
\address{Massachusetts Institute of Technology}
\ead[url]{https://sites.google.com/view/aroon-narayanan}
\begin{abstract}
Agents' learning from feedback shapes economic outcomes, and many economic decision-makers today employ learning algorithms to make consequential choices. This note shows that a widely used learning algorithm---$\varepsilon$-Greedy---exhibits emergent risk aversion: it prefers actions with lower variance. When presented with actions of the same expectation, under a wide range of conditions, $\varepsilon$-Greedy chooses the lower-variance action with probability approaching one. This emergent preference can have wide-ranging consequences, ranging from concerns about fairness to homogenization, and holds transiently even when the riskier action has a strictly higher expected payoff. We discuss two methods to correct this bias. The first method requires the algorithm to reweight data as a function of how likely the actions were to be chosen. The second requires the algorithm to have optimistic estimates of actions for which it has not collected much data. We show that risk-neutrality is restored with these corrections.
\end{abstract}
\begin{keyword}
Online learning, behavior attribution, fairness.
\end{keyword}
\end{frontmatter}
\section{Introduction}
Decision-makers often confront the same problem repeatedly, using the outcomes of their choices in the past to guide their understanding of the best course of action for the next time they encounter it. For example, credit scores are used to approve or deny credit based on the borrower's past credit behavior, and pretrial detention decisions are made based on the defendant's criminal history. Who gets the money to advance their lives---and who gets put in jail with curtailed liberties---will crucially depend on \textit{how} the prior data is used to make those decisions.

Many heuristics for solving these types of problems---essentially revolving around keeping an estimate of how well an action has performed in the past---have been developed. These have also been turned into formal algorithms deployed on computers and shown to have desirable properties. Such \enquote{learning algorithms} simultaneously make decisions and learn how to make future decisions better. They are now widely used in the economy, from product recommendation and pricing to highly consequential areas such as credit decisions and pretrial detentions.

While most deployed algorithms have a plethora of provable desirable properties, such as identifying the best option in the limit (\emph{no-regret}), we define and demonstrate an important bias in widely used learning algorithms: risk aversion, which emerges without being explicitly specified by the algorithm designer. For example, consider a repeated binary choice: Either pull lever A and get a deterministic payoff of $0$, or pull lever B and get a stochastic payoff of $1$ or $-1$, distributed uniformly. At any point in time, contingent on past observations of payoffs from the action taken (the \emph{bandit feedback setting}), an algorithm chooses an action, potentially randomly, to take next. What is the probability that the learning algorithm selects each of the actions after $t$ rounds of interaction? A risk-averse learning algorithm would choose the deterministic reward action more often than the other action. We prove that a classic algorithm, $\varepsilon$-Greedy, chooses the deterministic action with probability converging to $1$. This property holds without $\varepsilon$-Greedy being explicitly designed to be risk-averse: its design specification is to record the average reward received from each action and choose with some probability the action with the highest estimated average reward or any action at random. It is an emergent property of the algorithm.

Risk aversion of algorithms is more than a mere intellectual curiosity. It has stark implications across many real-world applications, particularly for fairness in algorithmic choice. This is especially true given that learning algorithms are increasingly being deployed for highly consequential societal decisions in the form of pretrial algorithms and risk assessment tools (RATs) and credit scoring algorithms, which necessitates extra attention to their emergent and unintended properties. In many of the economic settings where algorithms are deployed to make such decisions, a risk-averse algorithm can perpetuate deep inequities in society. As a concrete example, consider a firm making credit decisions using a risk-averse algorithm. Underrepresented minorities often have wide variances in their credit scores, shaped partly by inequities in access to good credit opportunities:
\hyphenblockcquote{UKenglish}[893]{baradaran2018jim}{\emph{As the white suburbs and black inner cities diverged in their mortgage access, two different credit markets emerged in both zones. Lower-risk mortgages led to higher wealth and stability in the white suburbs. These conditions also led to a
healthy consumer credit market. In the redlined black ghettos, the economic climate was radically different.}}.
When faced with minority applicants with higher variability in credit history, a risk-averse algorithm may decide to systematically deny them \emph{even when} it would have approved privileged applicants with similar expected repayment probability but features that are correlated with less variability in credit repayment, and hence perpetuate centuries of iniquity. Yet another setting in which it can play a significant role is recommendation systems, which regularly improve their recommendations using feedback from users. The choices that the recommendation algorithm takes, such as which content to show next to a user or which products to provide as an answer to a search query in e-commerce, are determined by how valuable the recommendation is deemed to be. Here, too, risk aversion can lead to the recommendation system suppressing \enquote{noisier} content---which, in most cases, will be the less mainstream, more marginalized content---even when its deployers do not find such bias desirable. In the long run, this bias can also lead to the homogenization of the content on these platforms, as more divergent content gets screened out by the algorithm's decision to not recommend it.

Our first formal result is that the $\varepsilon$-Greedy algorithm exhibits perfect risk-aversion under some conditions: as $T \to \infty$, it chooses the riskless action with probability one. The proof exposes that the mechanism that leads to algorithmic risk preferences is related to how it estimates the value of each action. If an algorithm's estimate of each action is the simple average of the observed rewards in the past from these actions, which is what $\varepsilon$-Greedy does, its estimates will be biased because the algorithm undersamples actions after bad reward draws. We then discuss two corrections to the algorithm that enable it to be risk-neutral. Our first correction, which we call the \emph{Reweighted $\varepsilon$-Greedy}, counters the undersampling propensity by adjusting its estimate to account for the probability with which the action was chosen in the first place. We show that Reweighted $\varepsilon$-Greedy is risk neutral. We also propose another correction for a broader class of settings: the \emph{Optimistic $\varepsilon$-Greedy}. It introduces an optimism term to the estimate that corrects for the pessimism that bad draws induce, in the style of the celebrated Upper Confidence Bound (UCB) algorithm (refer \cite{auer2002using,auer2002finite,Lattimore2020}). Our third formal result shows that this correction also makes $\varepsilon$-Greedy risk neutral. We use simulations to explore the necessity of conditions we make in our theoretical analysis and the transient persistence of risk behavior even with unequal expected values for the actions. 

\subsection*{Related literature} \label{sec:litrev}

There is a large literature on learning in economics, compare, e.g., \cite{bolton1999strategic},  \cite{keller2005strategic}, \cite{klein2011negatively}; see \cite{bergemann2006bandit} for an early survey. The closest in spirit is \cite{bardhi2020early}, in which the authors show that even arbitrarily small differences in early-career discrimination can be highly consequential later on. Our results complement this literature by showing that algorithmic learning can exhibit unintended discrimination with strong consequences in the long run. A second branch of literature is empirical. Papers such as \cite{farber1996learning} and \cite{altonji2001employer} study learning by employers, showing that it has testable implications for wage dynamics. \cite{crawford2005uncertainty} applies learning to demand for pharmaceutical drugs. Recently, there has also been a strand that empirically analyzes the behavior of learning algorithmic, particularly in relation to collusion. \cite{calvano2020artificial}, \cite{musolff2022algorithmic}, and \cite{brown2023competition} find in different settings that pricing algorithms learn to play collusive equilibria, raising antitrust concerns about the use of such algorithms. Such emergent behavior in a game theoretic setting closely resembles the emergent preference we illustrate in an algorithmic decision-theoretic setting.

Our paper is connected to the computer science literature on the effect of biased payoff estimates in recommendation systems. \cite{causalrec} observes that online learning in recommendation systems leads to confounding of average user scores in recommendation systems and proposes algorithmic interventions to correct this bias. \cite{Chaney2018} proposes a model and shows that recommendation systems' biased estimates of user preferences can increase homogeneity and decrease user utility. Our study highlights the fact of noise in reward distributions as a reason for biased estimates instead of taking bias as a given.

We also relate to the study of fairness in bandit problems. While \cite{joseph2016fairness} considers fairness (which is equivalent to our notion of risk neutrality) as a constraint for algorithm design and constructs algorithms that (approximately) satisfy it, this paper provides evidence on the risk preferences of an existing algorithm, $\varepsilon$-Greedy, and proposes two ways to mitigate risk preferences. Consider also \cite{patil2021achieving} and \cite{liu2017calibrated} for treatments of fairness in bandit problems.

Finally, we relate to the regret analysis of bandit algorithms under diffusion scaling. \cite{kalvit2021closer} studies this for the Upper Confidence Bound algorithm (compare \autoref{thm:optimistic}). \cite{fan2021diffusion} derives the limit action distribution of Thompson sampling as a solution to a random ordinary differential equation.

\subsection*{Outline} 
The structure of the rest of this note is as follows. In \autoref{sec:model}, we introduce our online learning setup and our definition of risk aversion, along with formal definitions of our algorithms. The result on $\varepsilon$-Greedy's risk aversion is presented in \autoref{sec:epsgreedy}. We discuss two corrections of risk aversion in \autoref{sec:corr}. We complement our theoretical analysis with simulations in \autoref{sec:simulations}.

\section{Model}\label{sec:model}
In a bandit problem, a decision maker repeatedly takes an action from a finite set $A$, $\lvert A \rvert < \infty$. Each action $a$ is associated to a sub-Gaussian distribution $F_a \in \Delta (\R), a \in A$ with expectation $\mu_a$ and variance proxy $\sigma_a^2$.\footnote{A distribution is sub-Gaussian if $\int e^{\lambda X} \dd F_a(x) \le \exp(\frac{\lambda^2 \sigma_a^2}{2})$. In this case, $\sigma_a^2$ is called the \emph{variance proxy}.} A \emph{strategy} or \emph{algorithm} used by the decision maker is a function $\pi \colon \bigcup_{t=1}^\infty (A \times [0,1])^t \to \Delta(A)$. We denote action-reward histories by $(a_{1:t}, r_{1:t})$ and the probability that action $a \in A$ is chosen in round $t$ by $\pi_{at} = \pi(a_{1:t}, r_{1:t})_a$.

An algorithm $\pi$ generates (potentially random) sequences of \emph{actions} $(a_t)_{t \in \N}$ and \emph{rewards} or \emph{payoffs} $(r_t)_{t \in \N}$. For each $t\in \N$, repeatedly, the algorithm chooses an action $a_t \sim \pi_{t}$ and receives a reward $r_t \sim F_{a_t}$. Denote $N_a(t) \coloneqq \lvert \{ 1 \le t' \le t : a_t = a\}\rvert$ the number of times action $a$ has been chosen up to time $t$.

The main concept in this paper is a notion of \emph{risk-aversion} of algorithms. An algorithm is risk-neutral if it chooses (asymptotically) uniformly from amongst actions of equal expectation. In the long run, risk-averse algorithms prefer less risky actions than others of the same expectation. In the extreme case where the algorithm exclusively chooses (asymptotically) the least risky action among those of the same expectation, we call them \emph{perfectly} risk averse.

\begin{definition}
We call $\pi$ \emph{risk-neutral} if for any actions $a, a' \in A$ such that $\mu_a = \mu_{a'}$,
\[
\lim_{t \to \infty} \P[a_t = a] = \lim_{t \to \infty} \P[a_t = a']
\]
We call an algorithm \emph{risk-averse} if for all $a,  a' \in A$ such that $\mu_a = \mu_{a'}$ and $F_a \prec_\text{SOSD} F_{a'}$, it holds that
\[
\lim_{t \to \infty}\P [a_t = a] > \lim_{t \to \infty} \P [a_t = a'].\footnote{Distribution $F$ dominates $F'$ in second-order stochastic dominance, $F \succeq_{\text{SOSD}} F'$ if $\int u \dd F \ge \int u \dd F'$ for all concave, non-decreasing functions $u$, with a strict inequality for some such function $u$.}
\]
An algorithm is \emph{perfectly risk averse} if whenever there exists an $a \in A$ such that either $\mu_{a'} < \mu_a$ or $F_{a'} \prec_{\text{SOSD}} F_a$ for all actions $a ' \in A \setminus \{a\}$, it is true that
\[
\lim_{t \to \infty}\P [a_t = a] =1.
\]
\end{definition}
This paper considers the $\varepsilon$-Greedy algorithm and two variants of $\varepsilon$-Greedy with different sufficient statistics.
\begin{definition}[$\varepsilon$-Greedy] Let $(\varepsilon_t)_{t \in \N}$ be a $[0,1]$-valued sequence. $\varepsilon$-Greedy chooses the empirically best action with probability $1-\epsilon_t$, and randomizes between all the actions with probability $\epsilon_t$, i.e.
\begin{equation*}
    \pi_{at} = \begin{cases}
  \Unif (\argmax\limits_{a \in A} \mu_a(t-1) )& \text{ w.p. }1-\varepsilon_t \\  \Unif (A)  & \text{ w.p. } \varepsilon_t,
    \end{cases}
\end{equation*}
where $\mu_a(t-1)$ is historical average payoff
\[
\mu_a(t) \coloneqq \frac{1}{N_a(t)}\sum_{\substack{1 \le t ' \le t \\ a_{t'} = a}} r_{t'}.
\]
When $\varepsilon$-Greedy takes an action to maximize $\mu_a(t-1)$, we say it \emph{exploits} or \emph{takes an exploitation step}. Otherwise, it \emph{explores}. We also call $\mu_a(t-1)$ $\varepsilon$-Greedy's \emph{sufficient statistic}.
\end{definition}

The first variant reweighs data points to change their importance.

\begin{definition}[Reweighted $\varepsilon$-Greedy]
Reweighted $\varepsilon$-Greedy uses a reweighted payoff estimate as a sufficient statistic:
\begin{equation*}
    \mu_{a,r}(t) =\frac{1}{N_a(t) }\sum_{\substack{1 \le t' \le t\\a_t= a}} \frac{r_{t'}}{\sqrt{\pi_{at'}}}.
\end{equation*}
\end{definition}

A second intervention adds an optimism term to the sufficient statistic of $\varepsilon$-Greedy.

\begin{definition}[Optimistic $\varepsilon$-Greedy]
Optimistic $\varepsilon$-Greedy uses sufficient statistic $\mu_{a,o}$, such that
\begin{equation*}
    \mu_{a,o}(t) = \mu_a(t) + \rho \sqrt{\frac{\log(t)}{N_a(t)}}.
\end{equation*}
If action $a$ has not been chosen before, $\mu_{a,o}(t-1) = \infty$.
\end{definition}

\section{Risk aversion of epsilon-Greedy}\label{sec:epsgreedy}
We first show that $\varepsilon$-Greedy exhibits perfectly risk-averse behavior.
\begin{theorem}\label{thm:epsgreedy}
Let $(\varepsilon_{t})_{t \in \N}$ such that $\varepsilon_t \to 0$ and $\sum_{t=1}^\infty \varepsilon_t = \infty$. If there is a deterministic, centered dominant action, and all other actions have symmetric continuous distributions, $\varepsilon$-Greedy is perfectly risk-averse.
\end{theorem}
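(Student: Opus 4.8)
The plan is to fix one risky action $a'$ and show that $\P[\mu_{a'}(t) \ge 0] \to 0$; a union bound over the finitely many risky actions then shows that the centered deterministic action $a^*$ is the unique empirical maximizer with probability tending to one, and since $\varepsilon_t \to 0$ this gives $\P[a_t = a^*] = (1-\varepsilon_t)\,\P[a^*\text{ is argmax}] + \varepsilon_t/|A| \to 1$. If $\mu_{a'} < 0$ the claim is routine: because $\sum_t \varepsilon_t = \infty$, exploration alone forces $N_{a'}(t) \to \infty$ almost surely, and the strong law applied to the i.i.d. reward draws of $a'$ gives $\mu_{a'}(t) \to \mu_{a'} < 0$. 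The entire difficulty is concentrated in the centered, symmetric, continuous case, which I treat next.

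Write the rewards of $a'$ as i.i.d. symmetric continuous variables $X_1, X_2, \dots$ and set $W_n = \sum_{k=1}^n X_k$, so that $\operatorname{sign}(\mu_{a'}(t)) = \operatorname{sign}(W_{n(t)})$ with $n(t) = N_{a'}(t)$. The key structural point is that $(W_n)_n$ is an ordinary symmetric random walk whose law does not depend on the sampling schedule, so by the Sparre Andersen theorem its sign statistics are distribution-free: $\P[W_k > 0] = \tfrac12$ for each $k$, and the lengths of its excursions above and below zero are identically distributed with the universal heavy tail of index $\tfrac12$. The schedule enters only through an endogenous clock. When $W_n < 0$ the empirical mean of $a'$ is negative, so $a'$ loses to $a^*$ (which always reads $0$) and can be pulled only on an exploration step, i.e. with probability at most $\varepsilon_s$; when $W_n > 0$ it may additionally be exploited. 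Thus, measured in rounds of real time, the steps taken while the walk is negative are dilated by a factor of order $1/\varepsilon_s \to \infty$ relative to the steps taken while it is positive.

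I would then balance sample counts against real time. The number of exploration pulls of $a'$ up to round $t$ is a sum of independent $\mathrm{Bernoulli}(\varepsilon_s/|A|)$ variables with mean $\tfrac{1}{|A|}\sum_{s \le t}\varepsilon_s = o(t)$ (Cesàro, using $\varepsilon_s \to 0$), and it concentrates around this mean; since every pull taken while $W < 0$ is necessarily an exploration pull, the number $D_{n(t)}$ of negative-region samples is $o(t)$. Because each negative-region sample consumes $\approx |A|/\varepsilon_s$ rounds, real time is overwhelmingly spent inside negative excursions: heuristically $t \approx U_{n(t)} + D_{n(t)}\,|A|/\bar\varepsilon$, where $U_{n(t)}$ counts positive-region samples, so the real-time fraction spent with $W > 0$—and hence $\P[W_{n(t)} > 0]$—is of order $\bar\varepsilon \to 0$.

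The main obstacle is that the clock index $n(t)$ is itself correlated with the walk, and the arcsine law blocks the naive shortcut $U_n \approx D_n \approx n/2$: the fraction of steps spent positive does not concentrate, and a walk that happens to stay positive for a long stretch both keeps $a'$ exploited and is traversed in little real time. Making the occupation argument rigorous therefore requires a genuine length-biased/renewal estimate—of Dynkin–Lamperti type, since the excursion lengths have infinite mean—showing that the time-dilated negative excursions dominate real-time occupation, with the further complication that $\varepsilon_t$ varies with $t$; I expect this estimate to be the crux. The final step is to upgrade the resulting bound on occupation time (a Cesàro-type statement) to the pointwise limit $\P[a_t = a^*] \to 1$ demanded by the definition of perfect risk-aversion, and to assemble the single-action bounds by a union bound over the finitely many risky actions.
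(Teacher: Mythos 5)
Your setup matches the paper's: reduce to one risky action versus the centered deterministic one, observe that the running sum $W_n$ of the risky action's rewards is a symmetric random walk whose sign determines whether that action can win an exploitation step, and invoke Sparre Andersen universality. But the route you take from there---a real-time occupation/clock-dilation argument---is not the paper's, and the two steps you yourself flag as open are exactly where the proof has to live, so as it stands this is a plan with the hard part missing. Concretely: (i) even if the Dynkin--Lamperti-type renewal analysis were carried out, your occupation estimate yields a Ces\`aro statement (the fraction of rounds $s \le t$ with $W_{n(s)}>0$ is small), and your sketch contains no mechanism for upgrading this to the pointwise limit $\P[W_{n(t)}>0]\to 0$ that perfect risk-aversion requires; a priori $\P[W_{n(t)}>0]$ could oscillate while its time-average vanishes. (ii) Your heuristic $t \approx U_{n(t)} + D_{n(t)}\lvert A\rvert/\bar\varepsilon$ presumes a single dilation scale $\bar\varepsilon$, but the hypothesis only gives $\varepsilon_s\to 0$ with $\sum_s\varepsilon_s=\infty$ (e.g.\ $\varepsilon_s=1/s$), so the dilation factor $1/\varepsilon_s$ is wildly non-uniform across the negative excursions and the renewal structure is genuinely time-inhomogeneous.

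The paper's proof sidesteps both problems with a different decomposition: it conditions on the age $\tau_t$ of the current excursion (the time since the last upcrossing of zero by the advantage walk $X_t$), proves $\tau_t\to\infty$ in probability, and then bounds $\P[X_t>0\mid\tau_t]$ by a constant times the persistence probability of a symmetric walk over the roughly $\tau_t(1-\varepsilon/2)$ steps actually taken while positive, which is of order $1/\sqrt{\pi\tau_t}$ by the Sparre Andersen asymptotics. Because the walk is sampled at nearly every round while it is positive, the age of the excursion translates directly into a number of walk steps survived, and the $n^{-1/2}$ persistence decay then gives the pointwise bound immediately---no occupation measure, no length-biasing correction, no Ces\`aro-to-pointwise step. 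If you want to salvage your route, the last-crossing conditioning is the ingredient to borrow; without it, or some substitute for it, the proposal does not yet constitute a proof.
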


A discussion on the conditions in this result is in order before we move into the proof. The learning rates are both necessary to yield a \emph{no-regret algorithm}, compare \cite{Lattimore2020}, and hence are rather innocuous. We show in our simulations in \autoref{sec:simulations} that relaxing the requirement of determinism of a dominant action leads to risk aversion, but not perfect risk aversion, as does relaxing the symmetry requirement. Our result also holds for Rademacher-distributed action rewards.

The intuition behind this result lies in the sampling bias of $\varepsilon$-Greedy. Upon receiving a low payoff for an action, it becomes less likely to choose that action and, hence, less likely to receive data to correct its estimate. This means that it keeps a pessimistic estimate of reward. This bias leads to behavior that is consistent with risk aversion.

\begin{proof}[Proof of Theorem \ref{thm:epsgreedy}]
We first observe that we can restrict to bandit problems of two actions with actions of the same expectation, one deterministic dominant action $a$ and another action $a'$ with a continuous, symmetric distribution. Under the assumptions on the learning rate made in the theorem, $\varepsilon$-Greedy chooses dominated actions with vanishing probability. We prove this in Lemma \ref{lem:epsgreedynoregret} in the appendix. As this probability is low, we may consider instances of actions of equal expectation. In addition, a union bound shows that if the probability that the algorithm chooses action $a$ over any single action $a'$ converges to $1$, this implies that this action will be chosen with probability one among all actions of the same probability. Hence, it is without loss to restrict to two-action bandit problems.

We also observe that the result is trivial for two deterministic actions with the same expectation. Hence, we may assume that $\var(F_a)$ has a positive variance. Furthermore, it is without loss to assume that the deterministic action is centered: $\varepsilon$-Greedy is invariant to the addition of a constant to all reward distributions.

As a final reduction step, as $\varepsilon_t \to 0$, it is sufficient to show that it becomes unlikely that $\varepsilon$-Greedy chooses $a'$ in exploitation steps, or
\[
\P[\mu_a(t) < \mu_{a'}(t)] \to 0.
\]
We express this event as a property of a stochastic process. As one of the actions in the $\varepsilon$-Greedy is deterministic and (without loss of generality) centered, the sum of the payoffs of the non-deterministic action, denoted by $(X_t)_{t \in \N}$, is a sufficient statistic for the dynamics of the algorithm. $(X_t)_{t \in \N}$ is a lazy random walk starting at the origin, $X_0 = 0$, with transition kernel
\[
X_{t+1} = \begin{cases}
    X_t + r & \text{ with probability $(1-\frac{\varepsilon}{2}) 1_{X_t > 0} + \frac12 1_{X_t = 0} + \frac{\varepsilon}{2} 1_{X_t < 0}$,}\\
    X_t & \text{else,}
\end{cases}
\]
where $r \sim F_a$. We call this the \emph{advantage walk} and depict it in \autoref{fig:cover}.
\begin{figure}
\begin{subfigure}{.5\textwidth}
  \centering
  \includegraphics[width=.95\linewidth]{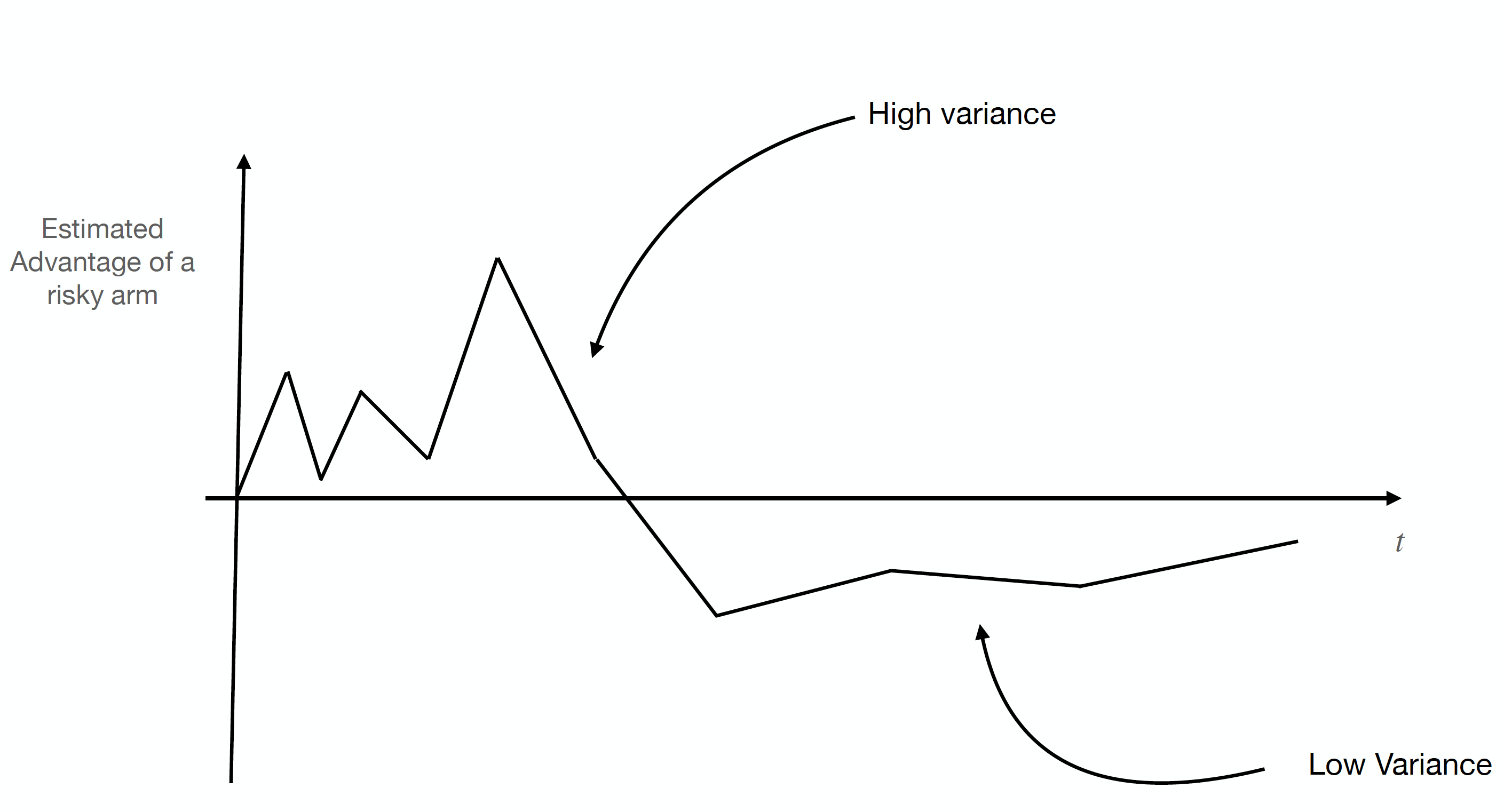}
  \caption{A stylized depiction of an advantage walk.}
  \label{subfig:stylizedqvalues}
\end{subfigure}%
\begin{subfigure}{.5\textwidth}
  \centering
  \includegraphics[width=.95\linewidth]{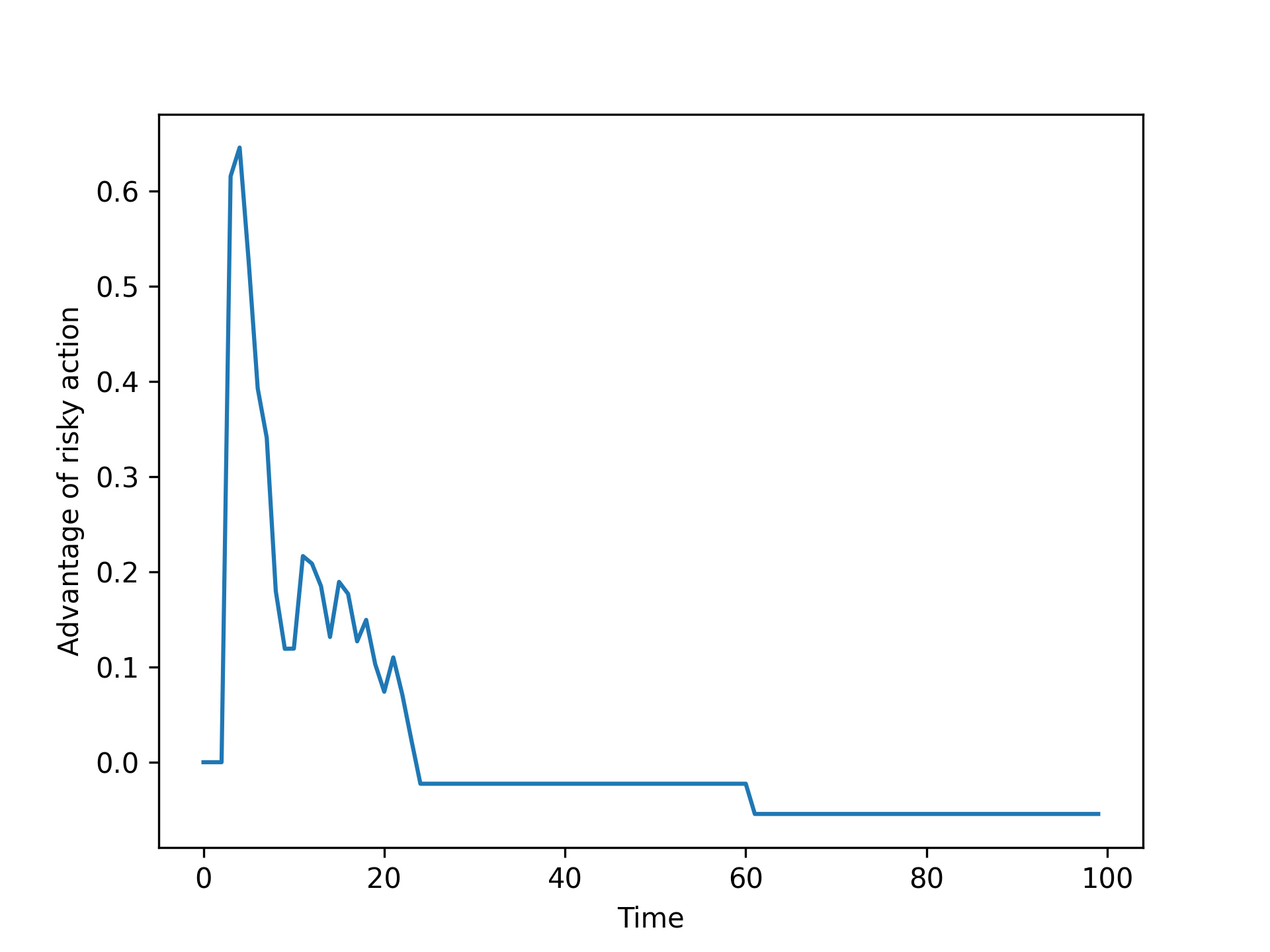}
  \caption{One realization of the advantage walk for $\varepsilon$-Greedy where the safe action has distribution $\mathbb{1}_{\{0\}}$ while the risky action has distribution $U[-1, 1]$}
  \label{subfig:realqvalues}
\end{subfigure}
\caption{The advantage walk for $\varepsilon$-Greedy. The main intuition for risk aversion in online algorithms is that a random walk with non-uniform variance spends more time in places with less variance.}
\label{fig:cover}
\end{figure}
Consider the time since the last time that the random walk crossed zero, 
\[
\tau_t \coloneqq t - \max \{2 \le t' \le t | X_{t'-1} \le 0 \le X_{t'}\}.
\]
We claim that $\tau_t \xrightarrow[t \to \infty]{\P} \infty$.

To prove this claim, observe that for any $c \ge 0$, there is $C > 0$ and $t \in \N$ such that for all $t' \ge t$
\[
\P[ \tau_{t'} \le c] \le \P[ \lvert X_{t' - c} \rvert < C] + \varepsilon \le 2 \varepsilon.
\]
The first inequality is a consequence of the sub-Gaussianity of $F_a$. The second inequality is a result of $\var(F_a) > 0$, the conditional independence of increments, and $\sum_{t'= t-c}^t \varepsilon_{t'} \to \infty$ as $t \to \infty$.

We also define the distribution of the number of steps taken since $\tau_t$ on the positive side. These are distributed as
\begin{align*}
P_t &\sim \sum_{t' = t - \tau_t}^t Z_{t'}, \quad Z_t \sim \operatorname{Bern}(1-\varepsilon_t/2).
\end{align*}
As $P_{\tau_t}$ is a sum of i.i.d. random variables, by Hoeffding's inequality, $P_{\tau_t}$ is close to its conditional expectation $\E[P_{\tau_t} | \tau_t]$ with high probability in $\tau_t$ and hence with high probability in $t$. This conditional expectation is
\[
\E[P_{\tau_t}| \tau_t] = \sum_{t'= t- \tau_t} 1 - \frac{\varepsilon_{t'}}{2}.
\]
Hence, for any $\delta>0$ that there is $t \in \N$ such that for all $t' \ge t$
\begin{equation*}
\begin{split}
\P[X_{t'} > 0 | \tau_t ] &\ge \P[Y_1, Y_2, \dots, Y_{P_{t'}} > 0 | \tau_t].
\end{split}
\end{equation*}
where $Y_0 = 0$ and $(Y_t)_{t \in \N}$ is a standard random walk with increment distribution $F_a$. 

Hence, for any $\delta>0$, there is $t' \in \N$ such that for all $t' \ge t$,
\begin{equation}
\begin{split}
\P[X_{t} > 0 | \tau_{t}] &\le c\P[Y_1, Y_2, \dots, Y_{P_{t} + 1} > 0] \\
&\le  \frac{c}{\sqrt{\pi(\sum_{t'= t- \tau_t} 1 - \frac{\varepsilon_{t'}}{2})}} (1+\delta).
\end{split}\label{eq:lengthbound}
\end{equation}
% The first inequality uses the fact that $X_{t - \tau_t} > 0$, and $Y_1, Y_2, \dots, Y_{P_t}$ corresponds to the steps of $X_{t - \tau_t}, X_{t - \tau_t+1},$ $\dots , X_{t}$.
For the first inequality, we can choose $c \ge 1/\P[ r > X_{t' - \tau_{t'}}]$, where $r \sim F_a$ is independent of $(X_t)_{t \in \N}$. This follows as a single step from zero could lead from $0$ to $X_{t'}$, or a higher value. Because $X_{t' - \tau_{t'}}$ is reached from $X_{t' - \tau_{t'} - 1} < 0$, $\P[ r > X_{t' - \tau_{t'}}]> 0$ must be positive, and hence $c$ is well-defined. 

The second inequality uses the well-known property that the probability of a random walk stays positive until time $t$ with probability approximately $\frac{1}{\sqrt{\pi t}}$ \cite[Eqn. 35]{frisch1995universality}.\footnote{This property also holds for Rademacher-distributed increments and is the only place where we use the assumption that our distribution is continuous and symmetric.}

Observe that \eqref{eq:lengthbound} approaches $0$ as $\tau_t \to \infty$ and recall that $\tau_t \xrightarrow[t \to \infty]{\P} \infty$. Given these two facts,
\[
\P[X_t > 0] \xrightarrow[t \to \infty]{\P} 0.
\]
As $\P[X_t = 0] \to 0, t \to \infty$, this concludes the proof.
\end{proof}

We highlight that a similar argument applies to actions that are dominated by others in expectation. For two actions $a, a'$ of the same expectation, such that $a$ is deterministic but $a'$ is not, and a third action $a''$ such that $\mu_{a''} > \mu_a, \mu_{a'}$, the probability that $a$ is taken conditional  $a$ or $a'$ are taken converges to one.

\section{Achieving risk-neutrality}\label{sec:corr}
Next, we propose variants to the $\varepsilon$-Greedy sufficient statistic to achieve risk neutrality. These corrections are motivated by the technical analysis of the previous theorem as well as known algorithmic ideas (see ex \cite{auer2002using}), which enabled us to pinpoint the reason why it was exhibiting risk aversion, but crucially, we are also able to link them directly to insights on how policymakers should address possible biases in deployed algorithms. Thus, these \enquote{corrected} algorithms should also be seen as clarifying the underlying mechanisms and their redressal. 

\subsection{A reweighting approach to risk-neutrality}
The first approach stems from our analysis of variance: we should reweight data to achieve risk neutrality. A reweighted $\varepsilon$-Greedy provably is risk neutral if exploration is sufficiently high.
\begin{theorem}[Reweighted $\varepsilon$-Greedy]\label{thm:reweighted}
Let $\varepsilon_t =t^{\frac12 + \kappa}, \kappa \in (0, \frac12)$. Reweighted $\varepsilon$-Greedy is risk-neutral for two centered actions, one of which is deterministic.
\end{theorem}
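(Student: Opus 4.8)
The plan is to reduce risk-neutrality to a statement about the sign of a single martingale and then invoke a martingale central limit theorem. Since the deterministic action $a$ is centered, every reward it returns is $0$, so its reweighted statistic satisfies $\mu_{a,r}(t)=0$ identically. Writing $S_t \coloneqq \sum_{t'\le t,\, a_{t'}=a'} r_{t'}/\sqrt{\pi_{a't'}}$ for the unnormalized reweighted payoff of the risky action $a'$, the exploitation step prefers $a'$ exactly when $\mu_{a',r}(t-1)=S_{t-1}/N_{a'}(t-1)>0$, i.e. exactly when $S_{t-1}>0$. Because exploration is symmetric across the two actions and $\varepsilon_t\to0$, expanding $\P[a_t=a']=\E[\pi_{a't}]$ gives $\P[a_t=a']=(1-\tfrac{\varepsilon_t}{2})\P[S_{t-1}>0]+\tfrac{\varepsilon_t}{2}\P[S_{t-1}<0]+\tfrac12\P[S_{t-1}=0]$, with the analogous expression for $a$. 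Hence it suffices to prove $\P[S_t>0]\to\tfrac12$ (and $\P[S_t=0]\to 0$, which is immediate from continuity of $F_{a'}$); this forces $\P[a_t=a']\to\tfrac12=\lim\P[a_t=a]$.

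Next I would expose the martingale structure of $S_t$. Its increment at time $t'$ is $\Delta_{t'}=r_{t'}\1\{a_{t'}=a'\}/\sqrt{\pi_{a't'}}$, where $\pi_{a't'}$ is measurable with respect to the history $\mathcal F_{t'-1}$. Conditioning, $\E[\Delta_{t'}\mid\mathcal F_{t'-1}]=\pi_{a't'}\cdot\mu_{a'}/\sqrt{\pi_{a't'}}=0$ because $a'$ is centered, so $(S_t)$ is a martingale. The crucial computation---and the entire point of reweighting by $1/\sqrt{\pi}$ rather than some other power---is that the conditional second moment is constant: $\E[\Delta_{t'}^2\mid\mathcal F_{t'-1}]=\pi_{a't'}\cdot\E_{F_{a'}}[r^2]/\pi_{a't'}=\Var(F_{a'})$, independent of how likely $a'$ was to be sampled. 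Consequently the predictable quadratic variation is the deterministic, linear $\langle S\rangle_t=\Var(F_{a'})\,t$. This is exactly the correction of the sampling bias that drove \autoref{thm:epsgreedy}: the reweighting undoes the fact that $\varepsilon$-Greedy collects fewer observations after bad draws, so no time interval is over- or under-weighted in the variance.

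With a deterministic, linearly growing quadratic variation in hand, I would apply the Lindeberg--Feller martingale CLT to conclude $S_t/\sqrt{\Var(F_{a'})\,t}\Rightarrow N(0,1)$, whose symmetry yields $\P[S_t>0]\to\tfrac12$ and closes the argument. The hard part is verifying the conditional Lindeberg condition, and this is precisely where the prescribed slowly-vanishing exploration rate enters. The increments can be large on steps where $a'$ is not the incumbent, since there $\pi_{a't'}=\varepsilon_{t'}/2$ is the only lower bound and $|\Delta_{t'}|\le |r_{t'}|\sqrt{2/\varepsilon_{t'}}$. Using $\varepsilon_{t'}\gtrsim (t')^{-1/2+\kappa}$ gives $|\Delta_{t'}|\lesssim |r_{t'}|\,(t')^{1/4-\kappa/2}=o(\sqrt t\,)$ with room to spare, so the truncation threshold $\epsilon\sqrt{\langle S\rangle_t}$ forces $|r_{t'}|$ into a tail growing like $t^{1/4+\kappa/2}$; sub-Gaussianity of $F_{a'}$ then makes $\tfrac1t\sum_{t'\le t}\E[\Delta_{t'}^2\1\{|\Delta_{t'}|>\epsilon\sqrt{\langle S\rangle_t}\}\mid\mathcal F_{t'-1}]$ decay faster than any polynomial. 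If instead $\varepsilon_t$ decayed too quickly, the factor $1/\sqrt{\varepsilon_{t'}}$ would inflate the rare-but-large increments enough to break asymptotic normality---this is the sense in which \enquote{sufficiently high exploration} is needed, and checking that the stated rate lies on the right side of this threshold is the main technical obstacle.
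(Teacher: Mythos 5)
Your proposal is correct and follows essentially the same route as the paper: reduce risk-neutrality to showing that the reweighted payoff sum of the risky action is positive with limiting probability $\tfrac12$, observe that the $1/\sqrt{\pi}$ weighting makes the conditional increment variance constant so the quadratic variation is deterministic and linear, verify the conditional Lindeberg condition using the lower bound $\pi_{a't'}\ge\varepsilon_{t'}/2$ together with sub-Gaussianity, and invoke a martingale central limit theorem. Your write-up is, if anything, slightly more explicit than the paper's in spelling out the reduction from $\P[a_t=a']$ to $\P[S_{t-1}>0]$ and in identifying where the exploration rate $\varepsilon_t=t^{-1/2+\kappa}$ is actually needed.
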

The reweighting proposed here means that payoffs resulting from currently unfavoured actions are weighted more highly in the sufficient statistic of the algorithm. In the credit scoring setting, if the option of rejecting the loan is currently favored, the outcome of any credit that is given (due to exploration) is weighted more highly. Thus, this correction tells us that we should assign more importance to outcomes that resulted from choices that seemed apriori less attractive.

It is worth noting that this reweighting does \emph{not} lead to an unbiased estimator of action rewards. We show in simulations in an appendix that an unbiased estimator leads to a risk-affine algorithm, \autoref{sec:add_sim}. This is a result of what the rescaling does to the internal sufficient statistics: The goal of the algorithm proposed here is to equalize variance, which is at odds with producing an unbiased estimator.

Several comments on the assumptions are in order. The first assumption on exploration means that a sufficient amount of exploration is needed for this algorithm to be risk-neutral. We provide a simulation in \autoref{sec:simulations} showing that this condition is necessary for risk neutrality. The assumption of centeredness and determinism is needed for our proof, but risk aversion seems to hold beyond them, as we show in \autoref{sec:simulations}. On the other hand, the requirement that there are only two actions with the same expectation is crucial, as we show in another simulation in \autoref{sec:simulations}.
\begin{proof}
Note that if both actions are deterministic, the conclusion of the theorem holds trivial. Otherwise, denote the non-deterministic action by  $a \in A$. The evolution of choices can be expressed only based on the stochastic process 
\[
    Y_t = \sum_{\substack{1 \le t' \le t\\ a_t = a}}\frac{r_t}{\sqrt{\pi_t}}.
\]
If $Y_t > 0$, action $a$ is chosen with probablity $1 - \varepsilon_t/2$, if $Y_t = 0$, it is chosen with probability $1/2$, and if $Y_t < 0$, then it is chosen with probability $\varepsilon/2$. We define the random array
\[
X_{t't} = \frac{1}{\sqrt{t}} Y_{t'}
\]
This random array has the following properties:

\begin{description}
\item[Martingale]  It is an $L^2$-martingale array, i.e. $(X_{t't})_{1 \le t' \le t}$ is a square-integrable martingale with respect to its natural filtration.
\item[Asymptotic Variance] The conditional variances of martingale increments are constant (and deterministic).
\begin{align*}
\sum_{t' = 1}^t \E [ (X_{t't} - X_{(t'-1)t})^2 | X_{(t'-1)t}] &= \sum_{t'=1}^t \sum_{a \in A} \pi_{a(t'+1)} \E\left[ \frac{r_{a}^2}{\sqrt{t\pi_{a(t'+1)}}^2} \middle | X_{t't}\right] \\
&= \sum_{t'=1}^t \frac1t \sum_{a \in A} \sigma_a^2 \\
&= \sigma^2_a.
\end{align*}
In particular, as $n \to \infty$, $\E [ (X_{t't} - X_{(t'-1)t})^2 | X_{(t'-1)t}] \to  \sigma_a^2$ in probability. 
\item[Lindeberg Condition] For any $\varepsilon > 0$, we have that
\begin{align*}
\sum_{t'=1}^t \E[(X_{t't} - X_{(t' - 1)t})^2 \1_{\lvert X_{t't} - X_{(t'-1)t}\rvert \ge \varepsilon} | X_{(t'-1)t}]
&\le 2\sum_{t'=1}^t t^{-1}(t')^{1 - 2\kappa} \E[r^2 \1_{r \ge t^{-\frac12}(t')^{\frac12 - \kappa}\varepsilon}] \\
&= \frac{2}{t} \sum_{t'=1}^t (t')^{1 - 2\kappa} \E[r^2 \1_{r \ge t^{\frac12}(t')^{\frac12 - \kappa}\varepsilon}] \\
&\le  \frac{2}{t} \sum_{t'=1}^t (t')^{1 - 2\kappa} \sigma_a^2 e^{\frac{\lambda^2 \sigma_a^2}{2} - \lambda t^{\frac12}(t')^{\frac12 - \kappa}\varepsilon}\\
& \le \frac{2}{t} \sum_{t'=1}^t t^{1 - 2\kappa} \sigma_a^2 e^{\frac{\lambda^2 \sigma_a^2}{2} - \lambda t^{\frac12}\varepsilon}\\
& \to 0.
\end{align*}
The first inequality uses the fact that we can bound 
\[
\E[(X_{t't} - X_{(t' - 1)t})^2 | X_{(t'-1)t}] \le \frac{\E[r^2]}{t\min_{a \in A} \pi_{at'}} \le \frac{2\E[r^2]}{t\varepsilon_{t'}} = \frac{2\sigma_a^2}{t\varepsilon_{t'}}.
\]
The second is arithmetic. The third applies a Chernoff bound. The last uses $1 \le t \le t'$. The convergence follows as exponential decay dominates polynomial growth and as convergence of a sequence implies convergence of the Cesàro mean.
\end{description}
Given these conditions, we can apply a Martingale Central Limit Theorem  \cite[Corollary 3.1]{hall2014martingale} and conclude that the distribution of $X_{tt}$ converges to $N(0,\sigma_a^2)$. This means that
\[
\P[X_{tt} > 0], \P[X_{tt} < 0] \xrightarrow{t \to \infty} \frac12,
\]
and hence
\[
\P[a_t = a] = \frac12.
\]
\end{proof}
While this algorithm works for two actions and, as we show, works for two actions, another approach to risk neutrality, \emph{optimism}, allows us to guarantee risk neutrality for an arbitrary number of actions of equal expectation. This is what we discuss in the next subsection.

\subsection{An optimism approach to risk-neutrality}
Another way to modify the sufficient statistic is not to reweight but to explicitly favor alternatives that have not previously been chosen as frequently in the past. Conventionally, this is referred to as \emph{optimism} in the multi-armed bandit literature, compare \cite[Section 1.3.3]{slivkins2019introduction}. We show that it ensures risk-neutrality.

\begin{theorem}[Optimistic $\varepsilon$-Greedy]\label{thm:optimistic}
There exists $\rho_0 > 1$ such that for any $\rho \ge \rho_0$ and any $(\varepsilon_{t})_{t \in \N}$ with $\varepsilon_t \to 0$, Optimistic $\varepsilon$-Greedy is risk-neutral.
\end{theorem}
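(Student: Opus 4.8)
The plan is to transplant the concentration analysis of UCB and show that the optimism bonus removes exactly the undersampling that produced risk aversion in \autoref{thm:epsgreedy}. First I would reduce to comparing two actions $a,a'$ with a common mean $\mu$, and observe that the only nontrivial case is when $\mu$ is the maximal mean: if some action has strictly larger mean, then $a$ and $a'$ are both suboptimal and I will show each is chosen with vanishing probability, so the required identity $\lim_t\P[a_t=a]=\lim_t\P[a_t=a']$ holds trivially with both limits equal to $0$. The backbone is a good event $G$ on which, for all actions and all large $t$, the sub-Gaussian tail bound yields $|\mu_a(t)-\mu_a|\le \beta\sqrt{\log t/N_a(t)}$ for a constant $\beta$ determined by the variance proxies; a peeling/union bound over the possible values of $N_a(t)$ gives $\P[G^{c}]\to 0$. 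The threshold $\rho_0$ is then taken large enough (exceeding $\beta$) that the optimism bonus strictly dominates this confidence width.

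Next I would record two consequences of optimism. Every action is sampled infinitely often: if $N_a(t)$ stayed bounded, its bonus $\rho\sqrt{\log t/N_a(t)}\to\infty$ would eventually exceed the bounded indices of all other actions and force $a$ to be exploited, a contradiction; hence $N_a(t)\to\infty$ and $\mu_a(t)\to\mu_a$ for every $a$. Moreover, the standard UCB bookkeeping shows that an action with strictly smaller mean and gap $\Delta>0$ can be the exploitation maximizer only while $N_a(t)=O(\log t/\Delta^{2})$ on $G$, so it accrues only $O(\log t)$ pulls and is selected with vanishing probability. This disposes of the suboptimal case and reduces everything to actions sharing the maximal mean $\mu^{*}$.

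The core is a balancing lemma for maximal-mean actions. On $G$, once $N_a(t)$ is large the empirical deviation $|\mu_a(t)-\mu^{*}|$ is of order $\sqrt{1/N_a(t)}$ (indeed $O(\sqrt{\log\log N_a/N_a})$ by the law of the iterated logarithm), which is negligible next to the bonus $\rho\sqrt{\log t/N_a(t)}$ because $\log\log N_a\le\log\log t\ll\log t$. Consequently an exploitation step selects, up to vanishing perturbations, the maximal-mean action with the \emph{smallest} count, so the dynamics approximate a round-robin that equalizes counts. To quantify this I would track the imbalance $g=N_{a'}(t)-N_a(t)$: writing the selection inequality $\mu_{a'}(t-1)+\rho\sqrt{\log t/N_{a'}}\ge \mu_a(t-1)+\rho\sqrt{\log t/N_a}$ and Taylor-expanding the bonuses shows that the already-larger action can be pulled only when the empirical fluctuation of order $\sqrt{1/N}$ exceeds the bonus gap $\asymp \rho\sqrt{\log t}\,g/N^{3/2}$, i.e. only while $g\lesssim N/(\rho\sqrt{\log t})$. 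Thus the relative imbalance satisfies $g/N=O(1/\sqrt{\log t})\to 0$, giving $N_a(t)/N_{a'}(t)\to 1$.

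Finally I would convert balanced counts into equal selection probabilities. The exploration term contributes $\varepsilon_t/|A|$ to each of $\P[a_t=a]$ and $\P[a_t=a']$ and cancels in their difference, while exploitation is governed by the counts; since $\E[N_a(t)]=\sum_{s\le t}\P[a_s=a]$ and the $m$ maximal-mean actions share all but $O(\log t)$ of the pulls, the balancing forces $N_a(t)/t\to 1/m$ in $L^{1}$, and a Cesàro argument then yields $\lim_t\P[a_t=a]=\lim_t\P[a_t=a']$. I expect the main obstacle to be the balancing lemma of the third paragraph: turning the heuristic that the imbalance is self-limiting into a rigorous statement requires controlling the residual empirical fluctuations against the bonus gap on the correct $\sqrt{1/N}$ scale (a worst-case confidence width only delivers a constant-factor balance, not $N_a/N_{a'}\to 1$), and the passage from cumulative counts to the instantaneous probabilities in the definition of risk-neutrality must be handled with care.
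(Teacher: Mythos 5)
Your proposal follows essentially the same route as the paper, which adapts the proof of Kalvit and Zeevi's Theorem 2 for UCB: concentration of the empirical means, a count-balancing step driven by the fact that the optimism bonus $\rho\sqrt{\log t/N_a(t)}$ dominates the law-of-the-iterated-logarithm-scale fluctuations so that exploitation approximately round-robins among maximal-mean arms, and finally the observation that the vanishing uniform exploration layer does not disturb the limit. The balancing lemma you correctly flag as the crux is exactly the step the paper outsources to \cite[Appendix D]{kalvit2021closer} rather than proving, so your sketch and the paper's are at a comparable level of completeness.
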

\begin{proof}[Proof Sketch]
Note first that for non-exploration steps of Optimistic $\varepsilon$-Greedy, the policy is the same as Upper Confidence Bound with exploration coefficient $\rho$, compare \cite{auer2002using}. We adapt the proof of \cite[Theorem 2]{kalvitneurips} for our variant of $\varepsilon$-Greedy. Theorem 2 in \cite{kalvitneurips} shows that an optimistic policy without exploration has the property that 
\[
\lim_{t \to \infty} \P[a_t = a] \to \frac{1}{\lvert\argmax_{a \in A} \mu_a \rvert}
\]
for all $a$ such that $a \in \argmax_{a \in A} \mu_a$ and $\rho > \rho_0$. As Optimistic $\varepsilon$-Greedy does not incur regret as we show in \autoref{sec:proof}, this property implies that the algorithm does not incur regret.

The full proof can be found in \cite[Appendix D]{kalvit2021closer}. The proof goes as follows. First, show that $\frac{N_i(t)}{t} > \frac{1}{2\lvert\argmax_{a \in A} \mu_a \rvert}$ with probability approaching $1$ in the limit, i.e. the fraction of times any action is chosen can be bounded below in probability. This is proved by means of a union bound and a Hoeffding bound. The operative equation that gets to this lower bound is \cite[Eqn. 40]{kalvit2021closer}, which depends on \cite[Eqns. 35 and 39]{kalvit2021closer}. Using this lower bound, we show that $\lvert N_i(t) - N_j(t) \rvert $ is small, i.e. the difference in the number of times any two actions are chosen is small as the time goes to infinity. This again uses Markov's inequality, along with the Law of the Iterated Logarithm. Now, consider optimistic $\epsilon$-Greedy. Since it implements the Upper Confidence Bound policy with probability $1-\epsilon_t$ and randomizes otherwise, in either case, it must be eventually choosing uniformly from amongst the highest mean actions, except for the vanishing probability with which it chooses dominated actions.
\end{proof}

It is worth noting why the mathematical intuition from the earlier result on $\varepsilon$-Greedy breaks. In this proof, the main object was a random walk with different variances for positive and negative sufficient statistics. Optimism may be seen as introducing a drift towards the origin. The proof shows that this drift is strong enough to correct risk aversion. We also note that this approach works for non-centered random variables. 

This result demonstrates that one way to build a fairer world is to be as optimistic as is rationally possible. Linking back to the credit decisions example we referenced in the introduction, credit scoring algorithms should evaluate applicants in the best possible light, adjusting for the risk profile of minority applicants by accounting for the fact that they often have lower access to good credit opportunities.

\section{Simulations}\label{sec:simulations}
For the final section of this note, we use simulations to investigate how far our results extend beyond the conditions we have imposed in our theoretical results. 

\paragraph{On the conditions of \autoref{thm:epsgreedy}} Our first set of simulations, shown in \autoref{fig:epsilon-greedy}, relate to the conditions in \autoref{thm:epsgreedy}. \autoref{subfig:perfriskaversiona} shows that within the conditions of \autoref{thm:epsgreedy}, $\varepsilon$-Greedy exhibits perfect risk aversion. \autoref{subfig:perfriskaversionb} shows that the assumption of symmetry is not necessary, as even for asymmetric distributions such as the exponential distribution, risk aversion holds, and \autoref{subfig:perfriskaversionnoncentered} shows that it also holds for distributions that are not centered around $0$. If there is no optimal deterministic action, we show in \autoref{fig:three_arms} that $\varepsilon$-Greedy is risk-averse yet not perfectly risk-averse. Hence, we find that the risk aversion result is relatively robust.

\paragraph{On the conditions of \autoref{thm:reweighted}} Next, \autoref{fig:reweighted} shows that the results of \autoref{thm:reweighted} extend beyond its assumptions, but only as long as exploration is sufficiently high. The restriction to two actions is necessary, as \autoref{fig:reweightedthreearms} shows.

\paragraph{On the conditions of \autoref{thm:optimistic}} Finally, we consider the assumptions of \autoref{thm:optimistic}. In \autoref{fig:optimistic}, we see that Optimistic $\varepsilon$-Greedy is quite risk neutral as long as $\rho$ is sufficiently high, which is in line with \autoref{thm:optimistic}. When $\rho$ is low, the level of optimism is insufficient to counteract $\varepsilon$-Greedy's pessimism. 

\paragraph{Finite-time behavior for dominated actions} We show in \autoref{fig:better_risky} that $\varepsilon$-Greedy's risk aversion has large transient effects before asymptotic guarantees kick in. This clarifies that the bias we identify can persist for a long time---for example, credit decisions can continue to be significantly discriminatory with such an algorithm \textit{even when} the minority candidate has a strictly higher likelihood of repaying the loan.

\begin{figure}[ht]
\begin{subfigure}{.32\textwidth}
\centering
\includegraphics[width=.95\linewidth]{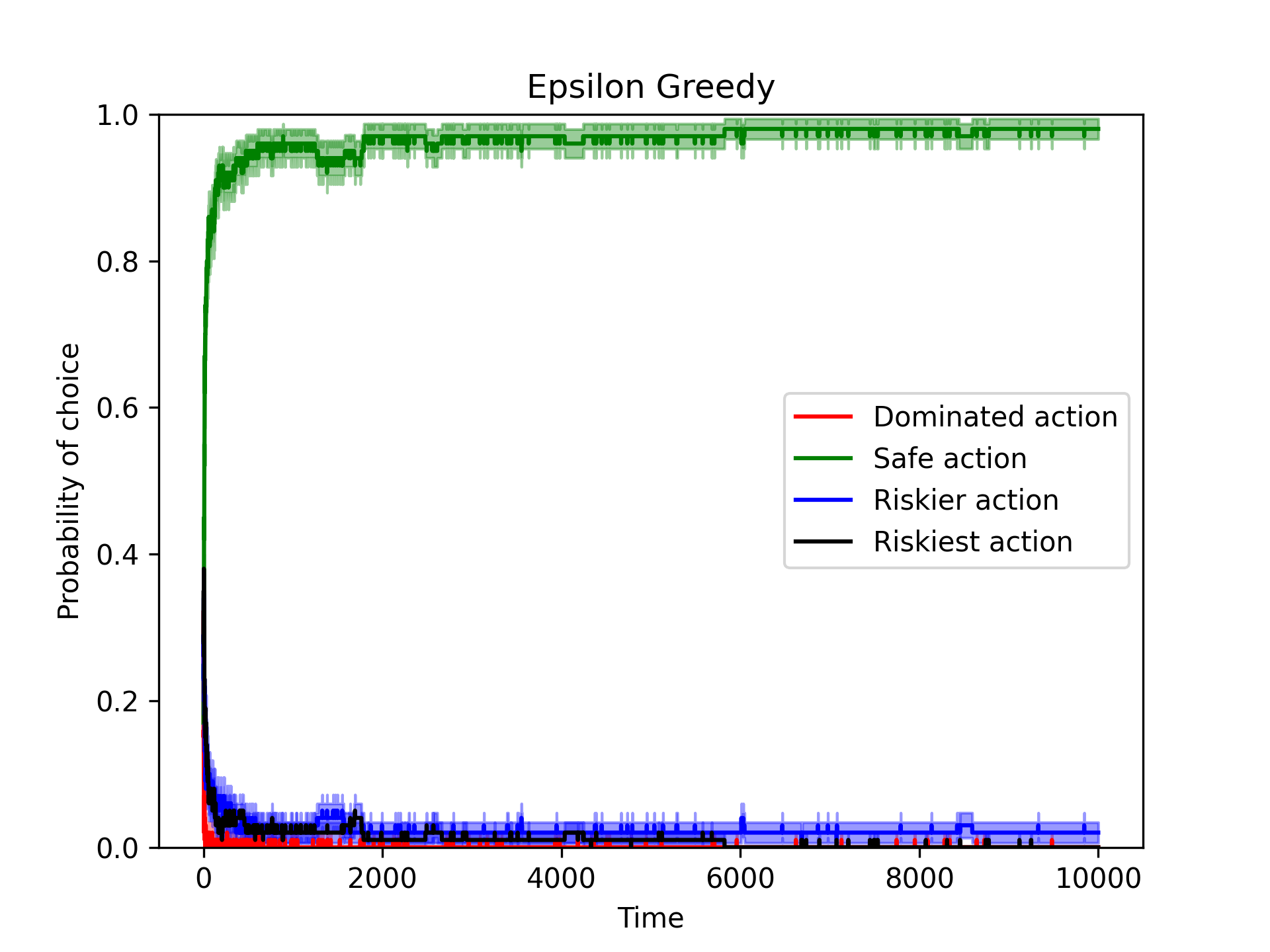}
\caption{Perfect risk aversion.}
\label{subfig:perfriskaversiona}
\end{subfigure}%
\begin{subfigure}{.32\textwidth}
\centering
\includegraphics[width=.95\linewidth]{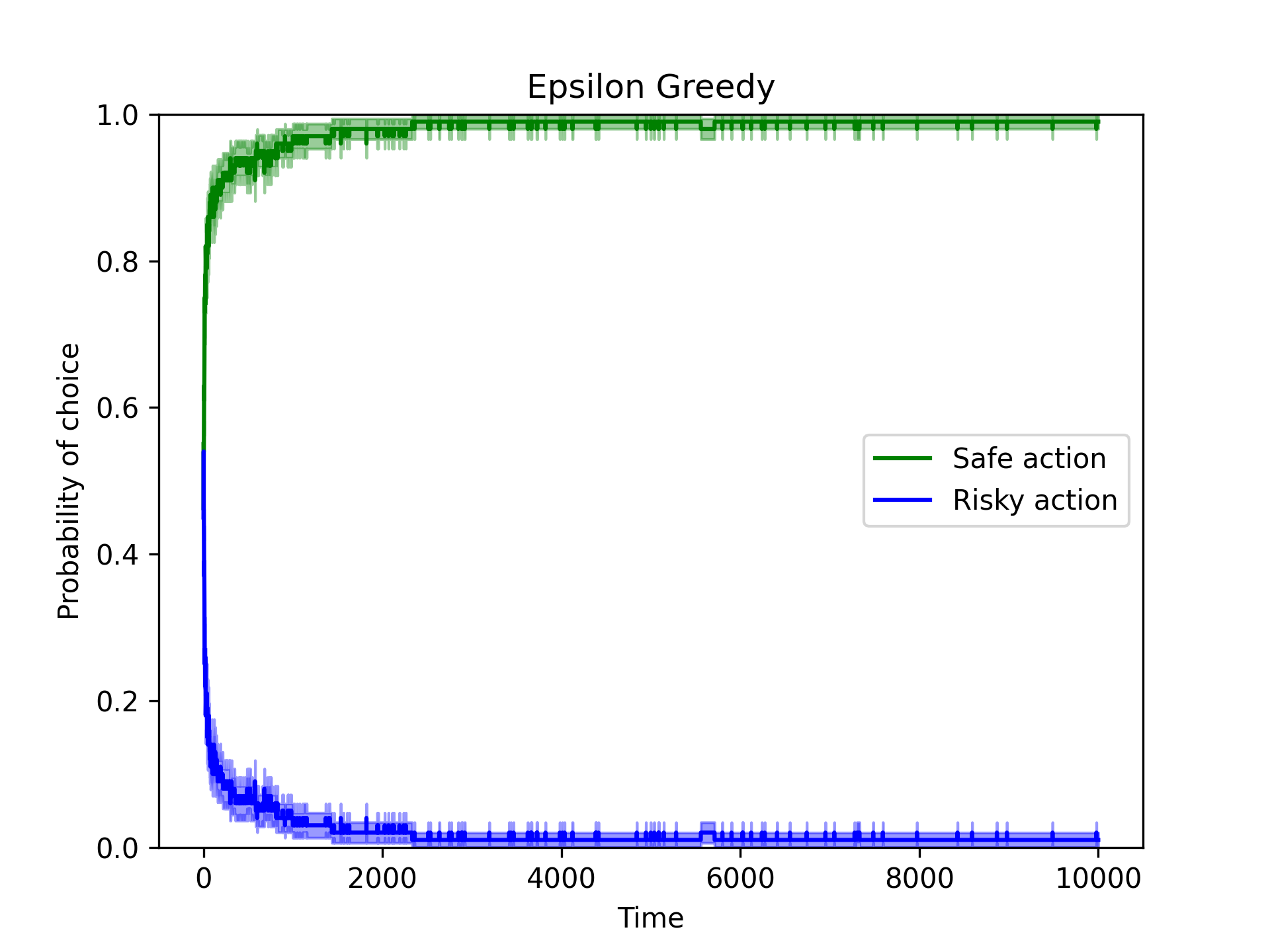}
\caption{Perfect risk aversion.}
\label{subfig:perfriskaversionb}
\end{subfigure}%
\begin{subfigure}{.32\textwidth}
\centering
\includegraphics[width=.95\linewidth]{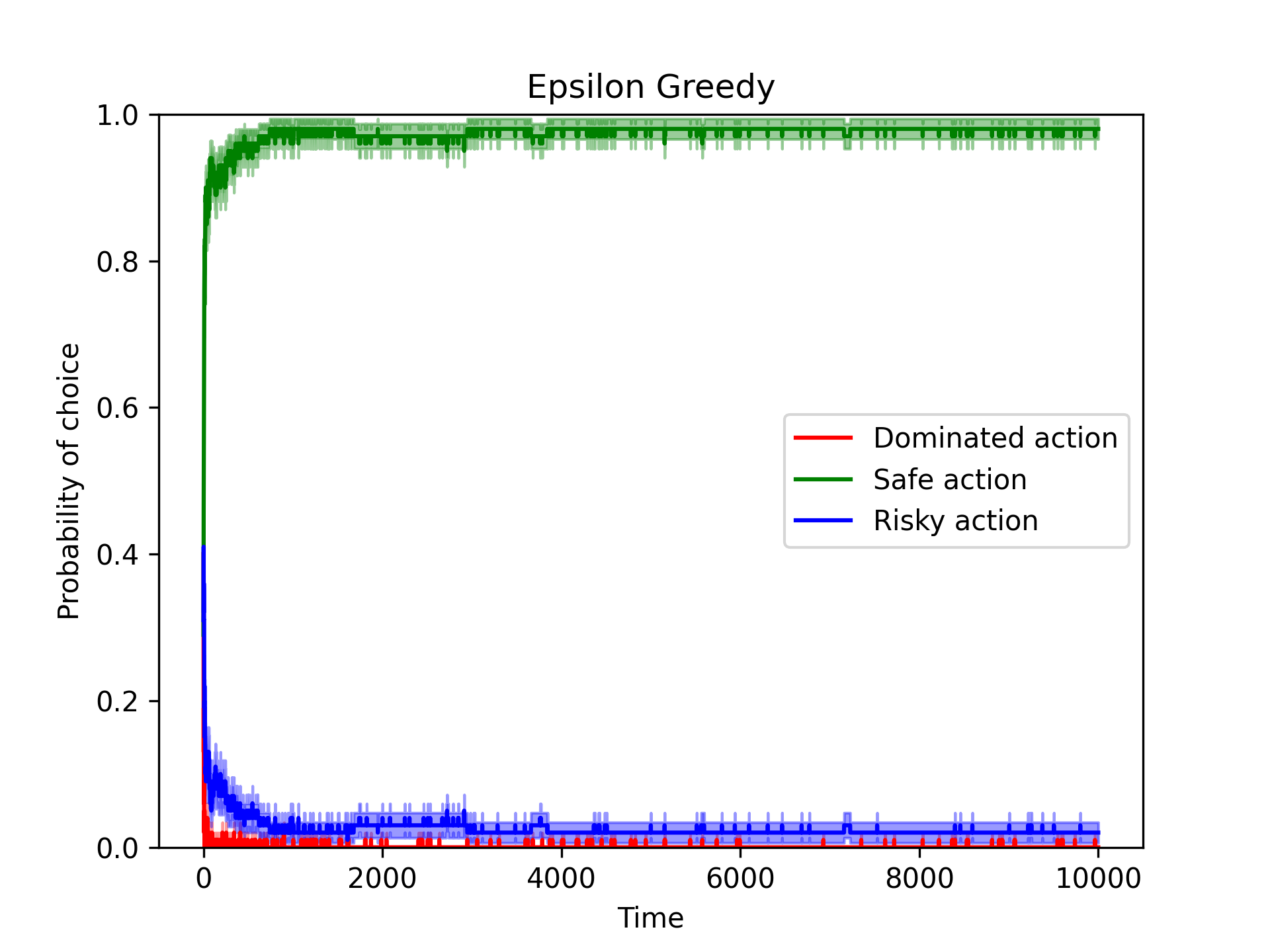}
\caption{Perfect risk aversion.}
\label{subfig:perfriskaversionnoncentered}
\end{subfigure}%
\caption{Plots of the behaviour of $\varepsilon$-Greedy, under and outside the conditions of \autoref{thm:epsgreedy}. In all plots, the dominated action has payoff distribution $\mathbb{1}_{\{-1\}}$. In (a), safe action has distribution $\mathbb{1}_{\{0\}}$, the riskier action has payoff distribution $U[-0.5, 0.5]$ while the riskiest action has payoff distribution $U[-1, 1]$. In (b) safe action has distribution $\mathbb{1}_{\{10\}}$ while the risky action has distribution an exponential distribution with rate 10 i.e. $\Exp(10)$. In (c), the safe action has distribution $\mathbb{1}_{\{0.5\}}$ while the risky action has distribution $U[-1, 2]$. We set $\varepsilon_t = t^{-1}$. Note that the bands around the plot line are $90\%$ confidence intervals, with 100 independent runs.}
\label{fig:epsilon-greedy}
\end{figure}

\begin{figure}[ht]
\begin{subfigure}{.32\textwidth}
\centering
\includegraphics[width=.95\linewidth]{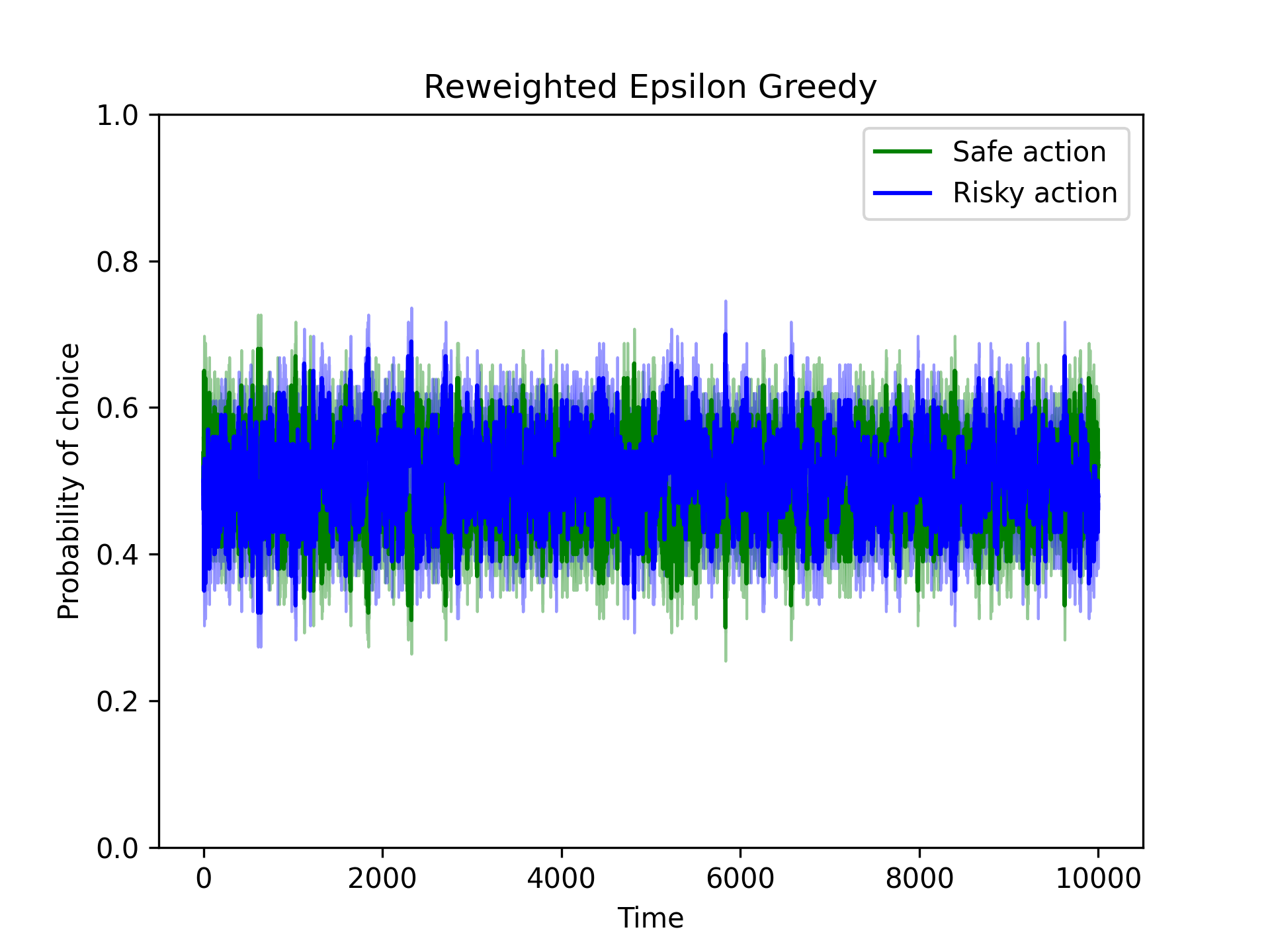}
\caption{Risk neutral.}
\label{fig:reweightedcentered}
\end{subfigure}%
\begin{subfigure}{.32\textwidth}
\centering
\includegraphics[width=.95\linewidth]{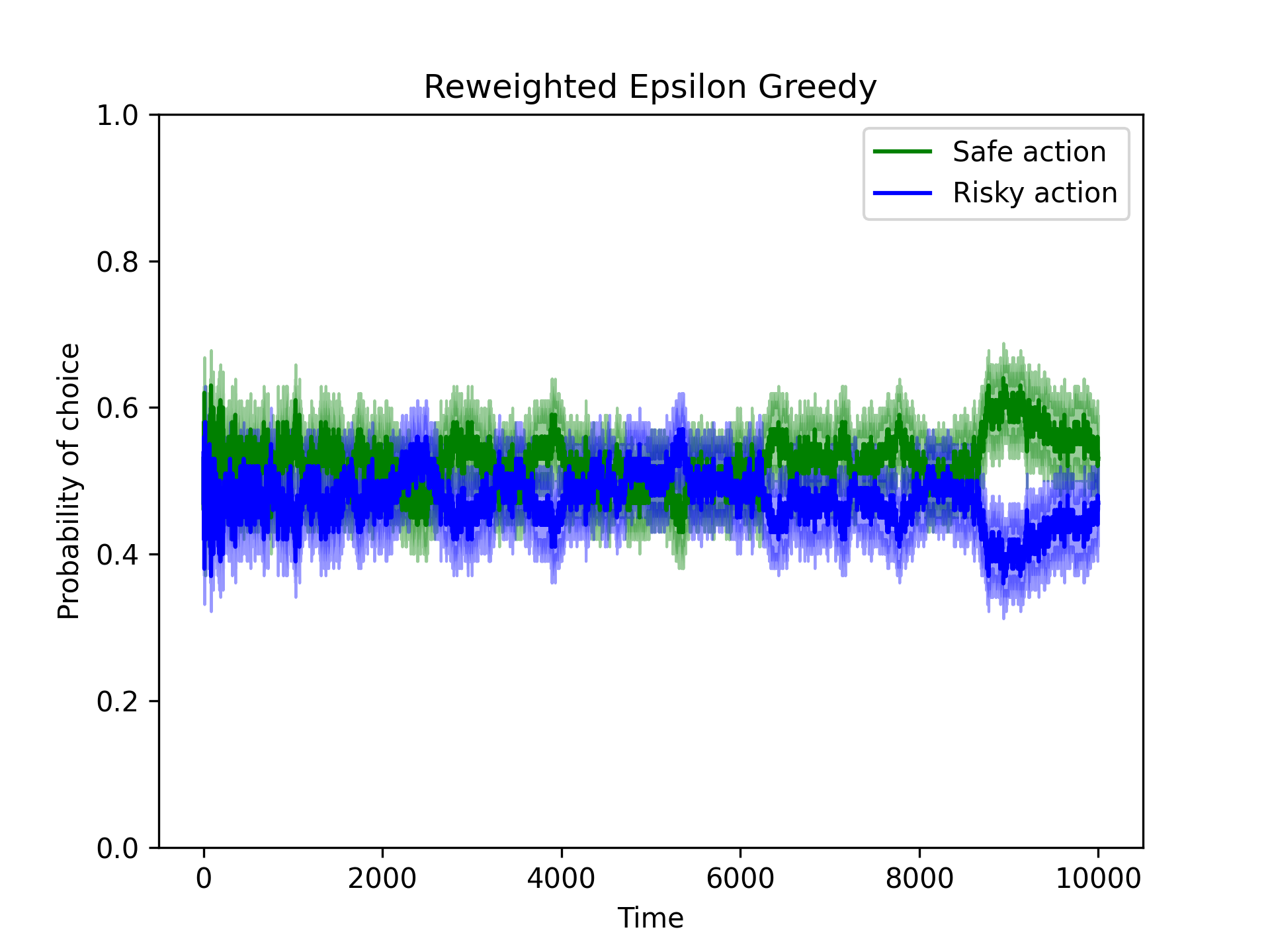}
\caption{Risk neutral.}
\label{fig:reweightednoncentered}
\end{subfigure}
\begin{subfigure}{.32\textwidth}
\centering
\includegraphics[width=.95\linewidth]{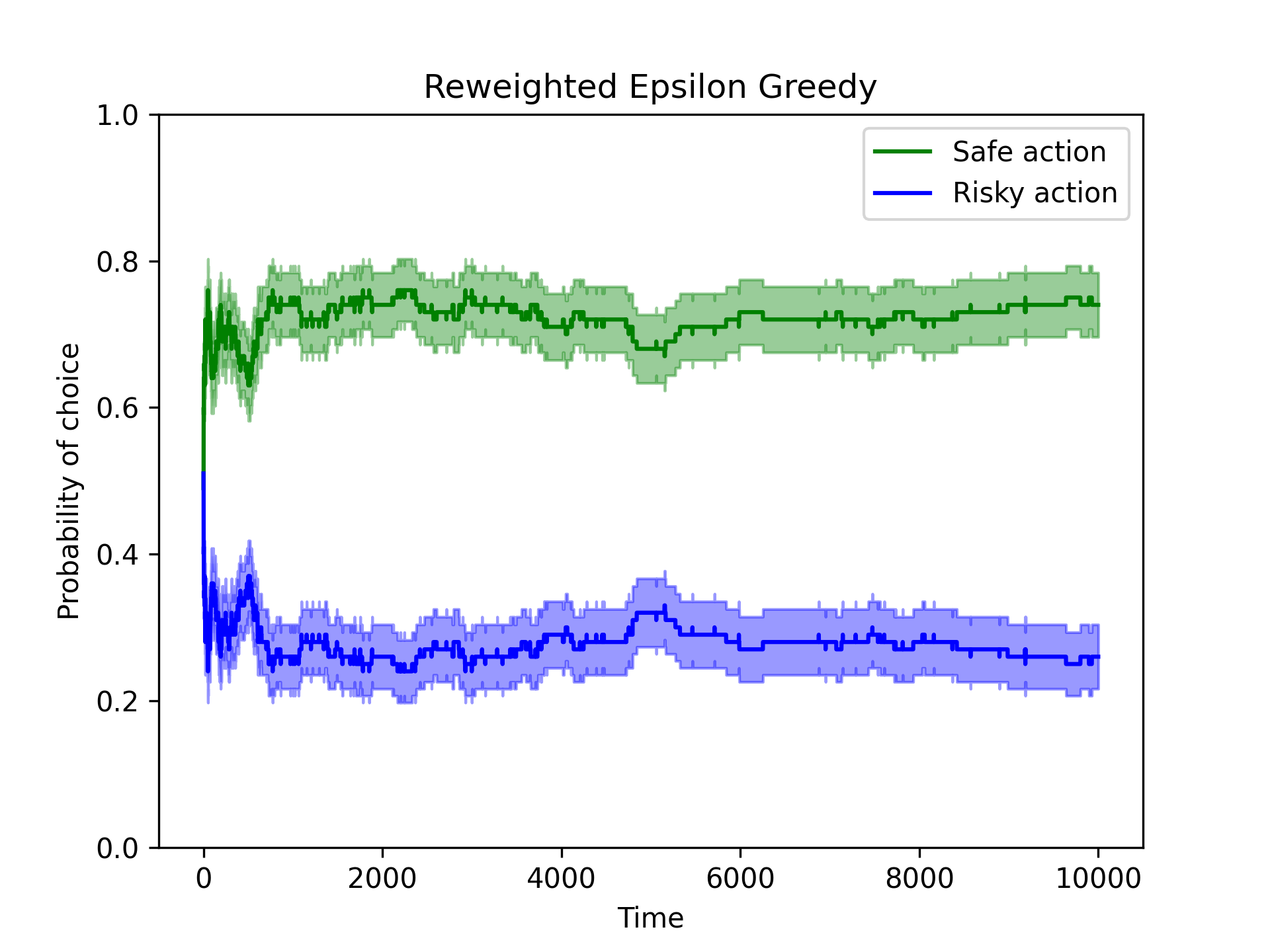}
\caption{Risk averse.}
\label{fig:reweightedlowexpl}
\end{subfigure}
\caption{Plots showing that the Reweighted $\varepsilon$-Greedy is risk-neutral for two actions as long as exploration is sufficiently high. In (a), the safe action has payoff distribution $\mathbb{1}_{\{0\}}$ while the risky action has payoff distribution $U[-1, 1]$. In (b), the safe action on the left has a distribution $U[0.25, 0.75]$ while the risky action has a payoff distribution $U[0, 1]$. We set $\varepsilon_t = t^{-0.49}$ in (a) and (b). In (c), we use the same setup as in experiment (a), but with an exploration rate of $\varepsilon_t = t^{-1}$. Note that the bands around the plot line are $90\%$ confidence intervals, with 100 independent runs.}
\label{fig:reweighted}
\end{figure}

\begin{figure}[ht]
\begin{subfigure}{.32\textwidth}
\centering
\includegraphics[width=.95\linewidth]{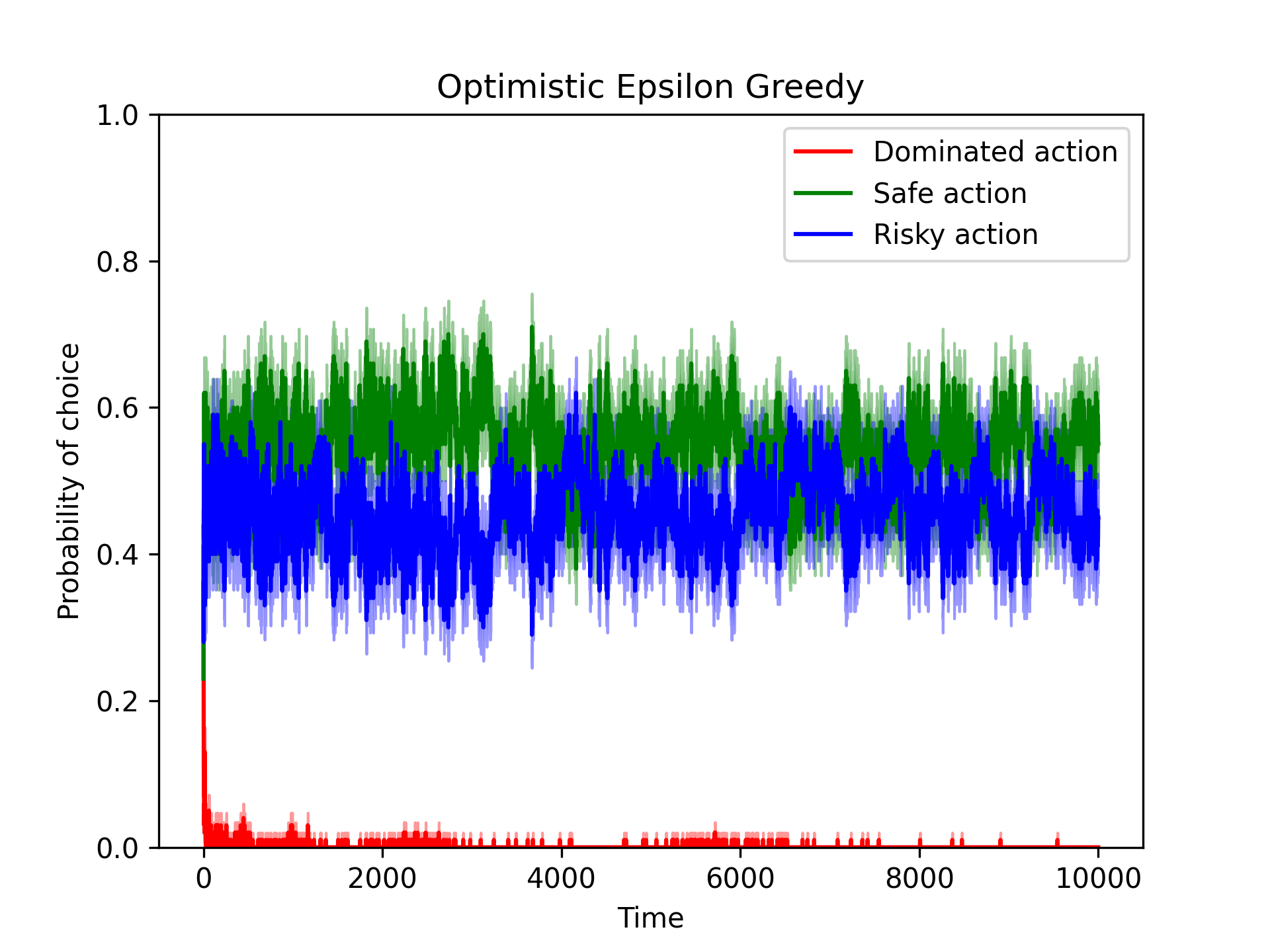}
\caption{Risk neutral.}
\label{fig:riskneutralcentered}
\end{subfigure}%
\begin{subfigure}{.32\textwidth}
\centering
\includegraphics[width=.95\linewidth]{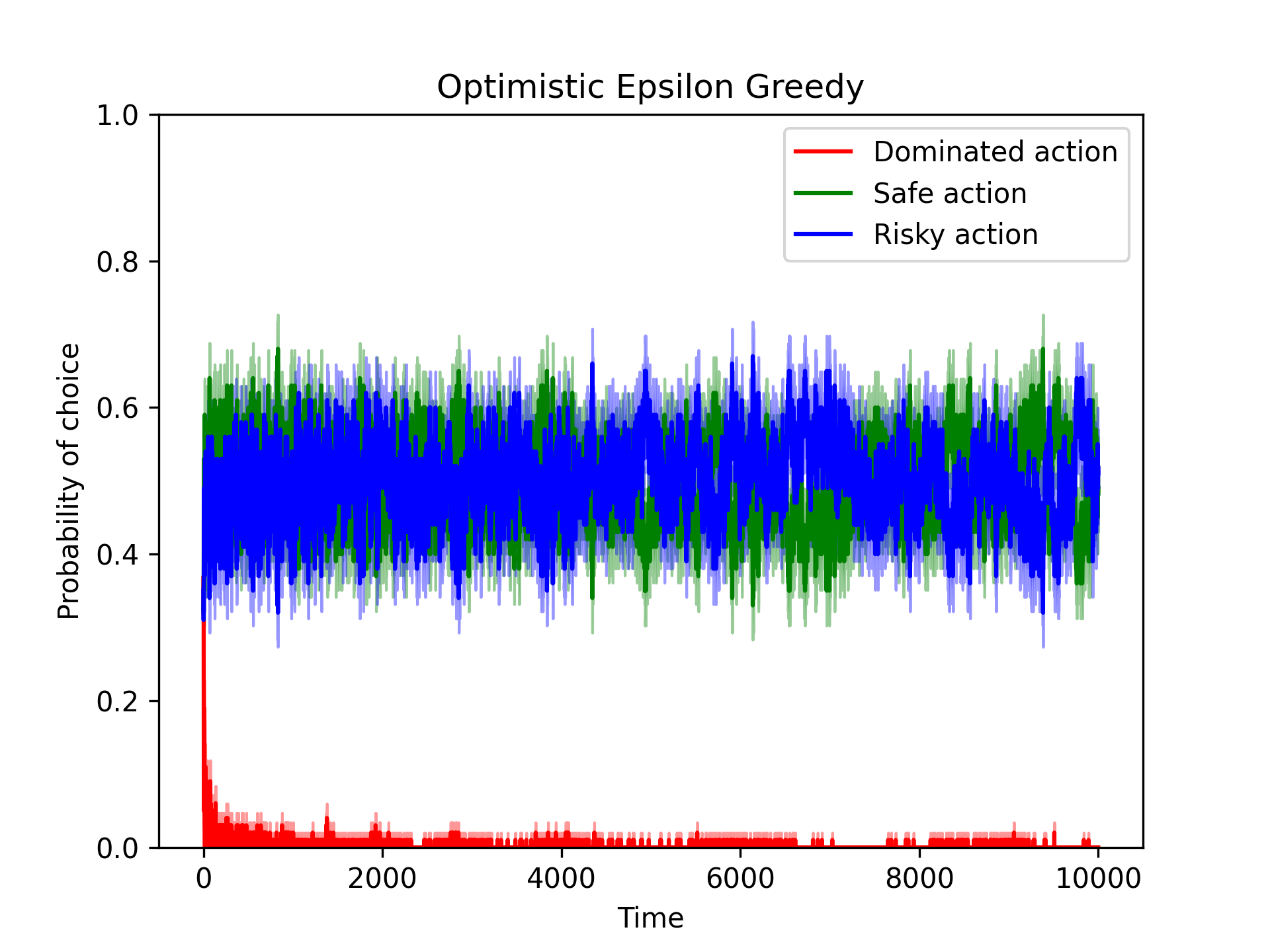}
\caption{Risk neutral.}
\label{fig:riskneutralnoncentered}
\end{subfigure}
\begin{subfigure}{.32\textwidth}
\centering
\includegraphics[width=.95\linewidth]{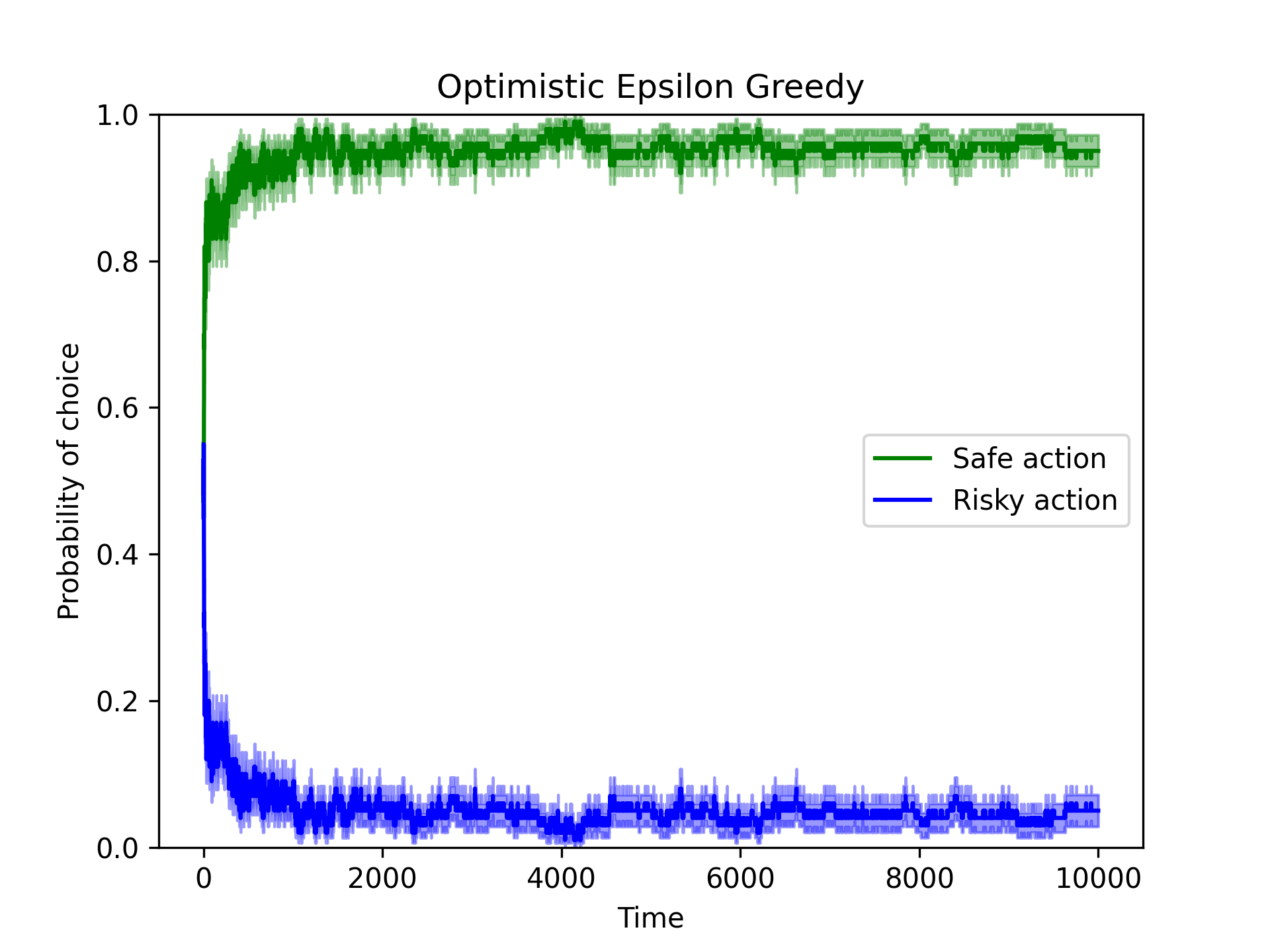}
\caption{Risk averse.}
\label{fig:riskaversesmallrho}
\end{subfigure}
\caption{Plots showing that the Optimistic $\varepsilon$-Greedy is quite generally risk-neutral, as long as exploration coefficient $\rho$ is sufficiently high. In all plots, the dominated action has payoff distribution $\mathbb{1}_{\{-1\}}$. In (a), the safe action has a payoff distribution $U[-0.25, 0.25]$ while the risky action has a payoff distribution $U[-1, 1]$. In b), the safe action on the left has a distribution $U[0.25, 0.75]$ while the risky action has a payoff distribution $U[0, 1]$. Both (a) and (b) have $\rho = 2$. In (c), the safe action has payoff distribution $\mathbb{1}_{\{0\}}$, and the risky action has payoff distribution $U[-1, 1]$, with $\rho = 0.02$. We set $\varepsilon_t = t^{-1}$. Note that the bands around the plot line are $90\%$ confidence intervals, with 100 independent runs.}
\label{fig:optimistic}
\end{figure}

\begin{figure}[ht]
\centering
\begin{subfigure}{.34\textwidth}
\centering
\includegraphics[width=.95\linewidth]{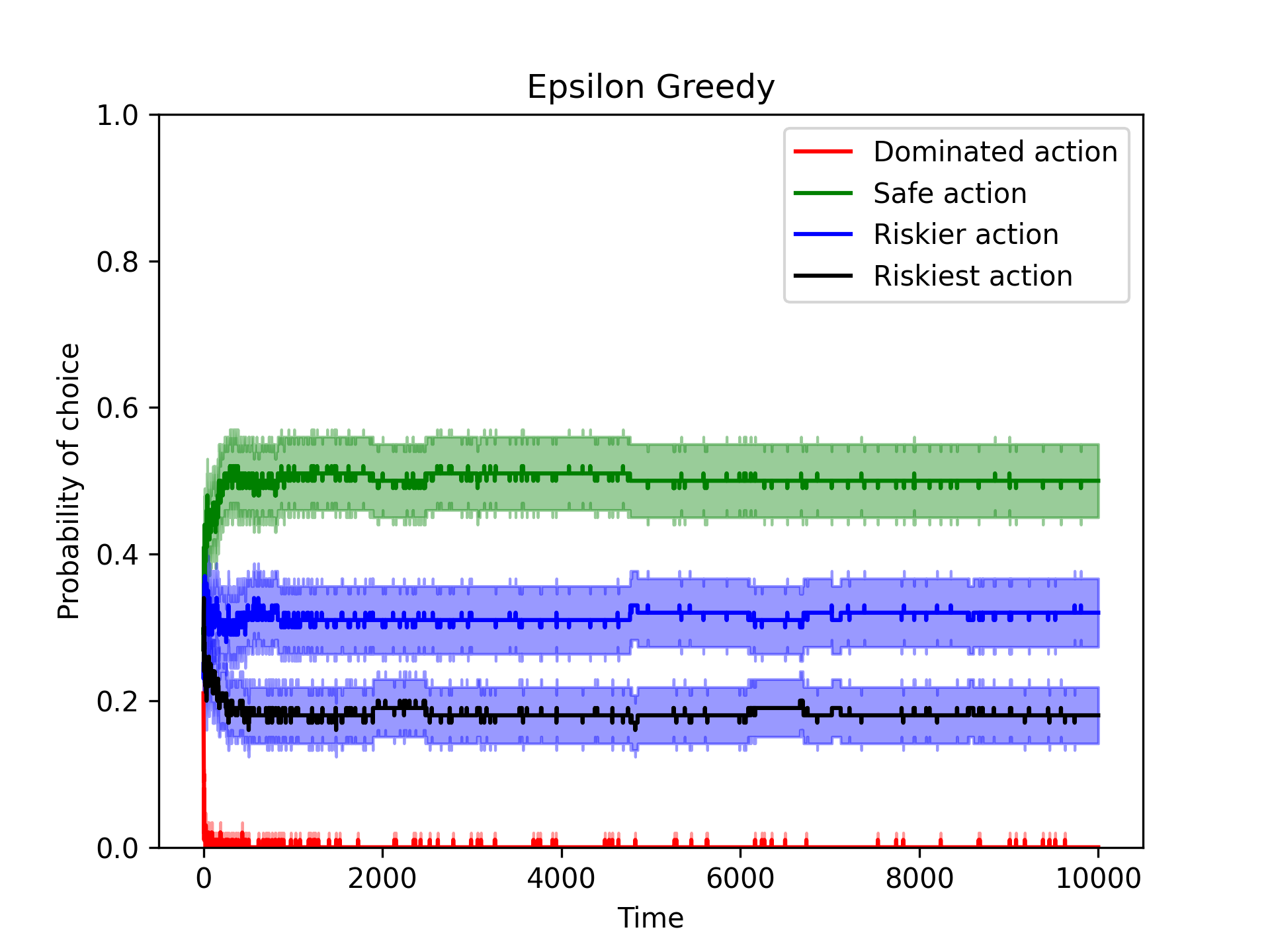}
\caption{$\varepsilon$-Greedy with no optimal safe action.}
\label{fig:three_arms}
\end{subfigure}%
\begin{subfigure}{.34\textwidth}
\includegraphics[width=.95\linewidth]{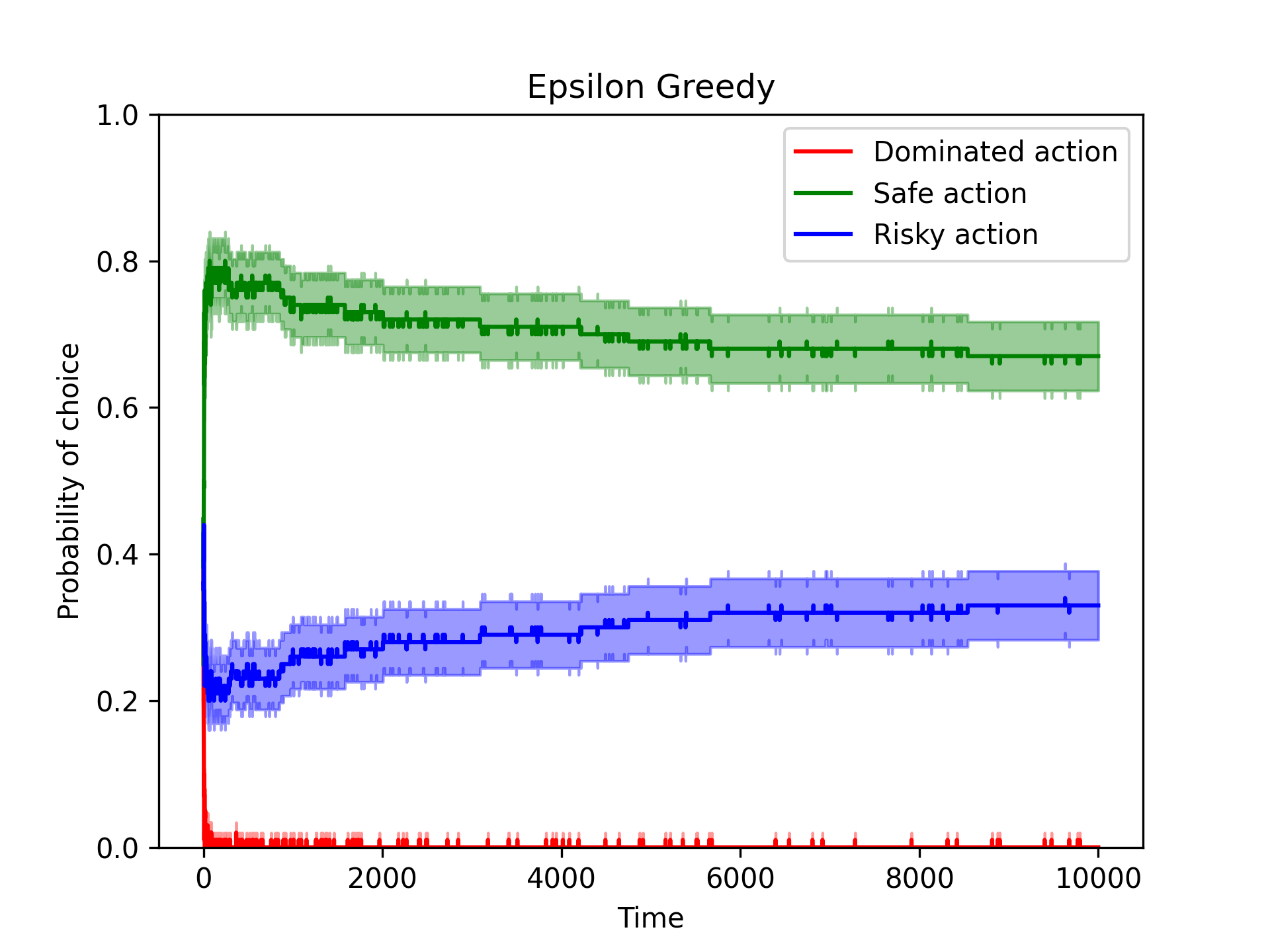}
\caption{$\varepsilon$-Greedy with a strictly better risky action. }
\label{fig:better_risky}
\end{subfigure}
\caption{Plots showing that $\varepsilon$-Greedy's risk aversion is quite general. In a), the dominated action has reward distribution $\mathbb{1}_{\{-1\}}$, the safe action has distribution $U[-0.25, 0.25]$, the riskier action has payoff distribution $U[-0.5, 0.5]$ while the riskiest action has payoff distribution $U[-1, 1]$. In b), the safe action has payoff distribution $\mathbb{1}_{\{0\}}$ while the risky action has payoff distribution $U[-1, 1.2]$. The dominated action has payoff distribution $\mathbb{1}_{\{-1\}}$. We set $\varepsilon_t = t^{-1}$. Note that the bands around the plot line are $90\%$ confidence intervals, with 100 independent runs.}
\label{fig:eps_extra}
\end{figure}

\begin{figure}[ht]
\centering
\includegraphics[width=.34\linewidth]{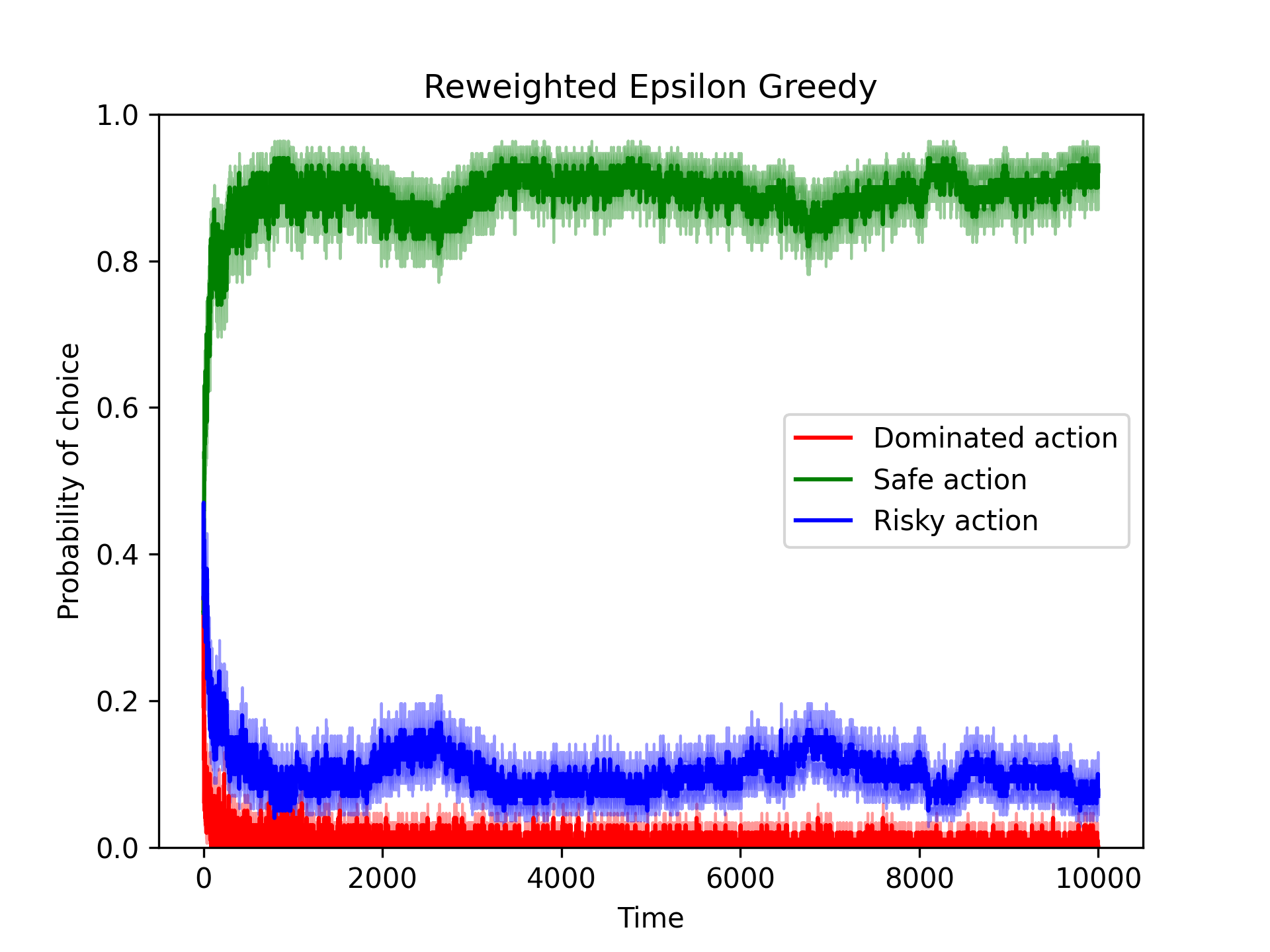}
\caption{Reweighted $\varepsilon$-Greedy where we add a third action with distribution $\mathbb{1}_{\{-1\}}$ to the setup of where the safe action has payoff distribution $\mathbb{1}_{\{0\}}$ while the risky action has payoff distribution $U[-1, 1]$. We set $\varepsilon_t = t^{-0.49}$. Note that the bands around the plot line are $90\%$ confidence intervals, with 100 independent runs.}
\label{fig:reweightedthreearms}
\end{figure}

\clearpage
\bibliographystyle{ACM-Reference-Format}
\bibliography{main.bib}

\clearpage
\appendix

\section{Regret analysis of algorithms}\label{sec:proof}
This section provides evidence that the algorithms we consider do not incur regret.
\begin{definition}
    An algorithm $\pi \colon $ \emph{is no-regret} or \emph{does not incur regret} if for all bandit problems such that $F_a$, $a \in A$ is sub-Gaussian and for all $a, a'\in A$, $\mu_a < \mu_{a'}$ implies 
    \[
    \P[a_t = a] \xrightarrow{t \to \infty} 0.
    \]
\end{definition}
We provide a proof sketch for the following statement, which implies that $\varepsilon$-Greedy does not incur regret.
\begin{lemma}\label{lem:epsgreedynoregret}
Let $\varepsilon_t \to 0$, $\sum_{t=1}^\infty \varepsilon_t = \infty$. Also let $a \in A$, $\mu_a < \max_{a \in A} \mu_a$. Then, for any $\delta > 0$, there is $t \in \N$ such that for all $t' \ge t$, 
\[
\pi_{t'} \le \delta.
\]
\end{lemma}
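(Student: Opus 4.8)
The plan is to show that any suboptimal action is eventually excluded from the exploitation step with high probability, so it can only be selected through the vanishing exploration probability. Write $a^* \in \argmax_{a' \in A} \mu_{a'}$ and let $\Delta \coloneqq \mu_{a^*} - \mu_a > 0$ be the suboptimality gap of the fixed action $a$. I read the conclusion as the statement that the unconditional choice probability $\P[a_{t'} = a] = \E[\pi_{a t'}]$ eventually falls below $\delta$, which is exactly what the no-regret definition requires.

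First I would establish that every action is sampled without bound, quantitatively $N_{a'}(t) \xrightarrow{\P} \infty$ for each $a'$. In round $t'$, independently of the history, $\varepsilon$-Greedy explores with probability $\varepsilon_{t'}$ and then picks $a'$ uniformly, so the conditional pull probability satisfies $\pi_{a't'} \ge \varepsilon_{t'}/|A|$ regardless of the past. Hence the indicators $\1_{a_{t'} = a'}$ can be coupled so that $N_{a'}(t)$ stochastically dominates a sum of independent $\operatorname{Bern}(\varepsilon_{t'}/|A|)$ variables, whose mean $m_t \coloneqq \frac{1}{|A|}\sum_{t'=1}^t \varepsilon_{t'}$ diverges by hypothesis. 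A multiplicative Chernoff bound then gives $\P[N_{a'}(t) \le m_t/2] \to 0$, so $N_{a'}(t) \to \infty$ in probability.

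Second I would turn $N_{a'}(t) \to \infty$ into concentration of $\mu_{a'}(t)$ around $\mu_{a'}$. This is the main obstacle: the number of pulls is random and adaptively correlated with the observed rewards, so a fixed-$n$ Hoeffding bound does not apply directly. I would couple the pulls of $a'$ to a pre-drawn i.i.d. sequence from $F_{a'}$ with partial sums $S_{a'}(n)$, so that $\mu_{a'}(t) = S_{a'}(N_{a'}(t))/N_{a'}(t)$, and then use sub-Gaussianity together with a peeling / union bound over sample counts $n \ge m$ to obtain $\P[\sup_{n \ge m} |S_{a'}(n)/n - \mu_{a'}| \ge \Delta/2] \le \eta(m)$ with $\eta(m) \to 0$. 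Combining this with $\P[N_{a'}(t) < m] \to 0$ yields, simultaneously for $a$ and $a^*$, $\P[|\mu_{a'}(t) - \mu_{a'}| \ge \Delta/2] \to 0$.

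Finally I would assemble the pieces. On the good event $G_{t'} = \{|\mu_a(t'-1) - \mu_a| < \Delta/2\} \cap \{|\mu_{a^*}(t'-1) - \mu_{a^*}| < \Delta/2\}$ one has $\mu_a(t'-1) < \mu_{a^*}(t'-1)$, so $a \notin \argmax$ and $a$ is chosen only on exploration, giving $\pi_{a t'} \le \varepsilon_{t'}/|A|$; off $G_{t'}$ I bound $\pi_{a t'} \le 1$. Taking expectations,
\[
\P[a_{t'} = a] = \E[\pi_{a t'}] \le \frac{\varepsilon_{t'}}{|A|} + \P[G_{t'}^c].
\]
Both terms vanish as $t' \to \infty$ — the first because $\varepsilon_{t'} \to 0$, the second by the concentration step — so there is $t \in \N$ with $\P[a_{t'} = a] \le \delta$ for all $t' \ge t$. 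The two hypotheses on the learning rate play complementary roles: $\sum_t \varepsilon_t = \infty$ guarantees enough exploration to force the concentration in the second step, while $\varepsilon_t \to 0$ controls the residual exploration probability in the last display.
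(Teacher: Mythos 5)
Your proposal is correct and follows essentially the same route as the paper's proof sketch: lower-bound the pull counts of every action via the forced exploration and a Chernoff/Hoeffding argument, deduce concentration of the empirical means around their true values, and conclude that the suboptimal action can only be selected through the vanishing exploration probability. Your treatment is somewhat more careful than the paper's sketch in that you explicitly address the adaptive correlation between the random pull count $N_{a'}(t)$ and the observed rewards via a uniform-over-$n$ (peeling) concentration bound, a point the paper's sketch passes over silently.
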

\begin{proof}[Proof Sketch]
Choose $t' \in \N$ such that for all $\tilde t \ge t'$, $\varepsilon_t \le \delta/2$. It is sufficient to show that for some $t''$, in exploitation steps,
\[
    \P [ \mu_a(t) - \mu_{a'}(t) \ge 0] \le \frac{\delta}{2}.
\]
By Hoeffding's inequality, with high probability in $t''$, both actions are chosen at least $\frac13 \sum_{t=1}^{t''} \varepsilon_t$ times. Conditional on this event, 
\[
\P [ \mu_a(t) - \mu_{a'}(t) \ge 0] \xrightarrow[t \to \infty]{}  0.
\]
In particular, 
\[
\P [ \mu_a(t) - \mu_{a'}(t) \ge 0] \le \frac{\delta}{2}.
\]
for some $t'' \in \N$ and any $\tilde t \ge t''$. Choosing $t \ge \max \{t', t''\}$ yields the claim.
\end{proof}
\begin{corollary}
$\varepsilon$-Greedy does not incur regret.
\end{corollary}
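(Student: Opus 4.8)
The plan is to read off the corollary directly from Lemma~\ref{lem:epsgreedynoregret}, since all of the real content already sits in that lemma. Fix any sub-Gaussian bandit problem and any pair of actions $a, a' \in A$ with $\mu_a < \mu_{a'}$; I must show $\P[a_t = a] \to 0$. The first observation is that the no-regret definition is really a statement about each individually suboptimal action, and the hypothesis of the lemma matches exactly: since $\mu_{a'} \le \max_{a'' \in A} \mu_{a''}$, the strict inequality $\mu_a < \mu_{a'}$ gives $\mu_a < \max_{a'' \in A} \mu_{a''}$, so $a$ is a suboptimal action in the sense required by the lemma. No union over actions is needed, because the conclusion is demanded only for this fixed pair.

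Given the learning-rate conditions $\varepsilon_t \to 0$ and $\sum_t \varepsilon_t = \infty$ (the standard conditions under which the preceding results are stated, and which I would inherit here), the lemma then guarantees that for every $\delta > 0$ there is a threshold $t$ beyond which the probability of selecting $a$ is at most $\delta$. Translating the lemma's conclusion $\pi_{at'} \le \delta$ into the marginal choice probability yields $\P[a_{t'} = a] \le \delta$ for all $t' \ge t$, and since $\delta > 0$ was arbitrary this is exactly $\P[a_t = a] \to 0$. As the pair $(a, a')$ and the bandit problem were arbitrary, $\varepsilon$-Greedy is no-regret.

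The only point that requires care — and the place where the substantive work lives, entirely inside the lemma — is the passage from the lemma's bound to a bound on the marginal probability. The choice probability $\pi_{at'}$ is itself a random variable depending on the realized history, so I would view the lemma as controlling $\P[a_{t'}=a]$ by splitting it into the exploration contribution (bounded by $\varepsilon_{t'} \le \delta/2$ once $t'$ is large) and the exploitation contribution (the event that $a$ is the empirical $\argmax$, which requires $\mu_a(t') \ge \mu_{a'}(t')$). The latter is handled inside the lemma by a Hoeffding argument: both arms are pulled at least $\tfrac13 \sum_{s \le t'} \varepsilon_s$ times with high probability, after which the empirical means separate and the suboptimal arm is almost never the empirical maximizer. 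I do not expect any additional obstacle at the level of the corollary, since it is a direct specialization of the lemma; the work is simply to confirm that every regret-incurring action falls under the lemma's notion of suboptimality and that the lemma's per-round bound upgrades to the limit statement in the definition.
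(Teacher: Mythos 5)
Your proposal is correct and matches the paper's treatment: the corollary is stated as an immediate consequence of Lemma~\ref{lem:epsgreedynoregret}, with all substantive work (the exploration/exploitation split and the Hoeffding argument) living inside the lemma, exactly as you describe. Your added remark that the random choice probability $\pi_{at'}\le\delta$ upgrades to $\P[a_{t'}=a]\le\delta$ (via $\P[a_{t'}=a]=\E[\pi_{at'}]$) is a small but legitimate clarification the paper leaves implicit.
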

Next, we provide a proof sketch that optimistic $\varepsilon$-Greedy does not incur regret.
\begin{proposition}
Optimistic $\varepsilon$-Greedy does not incur regret.
\end{proposition}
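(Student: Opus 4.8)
The plan is to reduce the claim to the standard regret analysis of UCB and to treat the forced exploration as negligible noise. Fix a suboptimal action $a$ with gap $\Delta \coloneqq \max_{a'} \mu_{a'} - \mu_a > 0$ and let $a^*$ be an optimal action. Writing $\P[a_t = a] = \varepsilon_t/\lvert A\rvert + (1-\varepsilon_t)\,\P[a \in \argmax_{a'} \mu_{a',o}(t-1)]$, the exploration term vanishes because $\varepsilon_t \to 0$, so it suffices to control the exploitation term, where Optimistic $\varepsilon$-Greedy selects $\argmax_{a'}\mu_{a',o}(t-1)$ --- exactly the UCB rule with coefficient $\rho$ (compare \cite{auer2002using,Lattimore2020}).

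First I would set up the confidence event $G_t = \{\lvert \mu_{a'}(t-1) - \mu_{a'}\rvert \le \rho\sqrt{\log t / N_{a'}(t-1)} \text{ for all } a'\}$. Sub-Gaussianity together with a union bound over arms and over the at most $t$ possible values of each count gives $\P[G_t^c] \le \lvert A\rvert\, t^{\,1 - c\rho^2}$ for a constant $c>0$; choosing $\rho \ge \rho_0$ with $\rho_0$ large enough (as in \autoref{thm:optimistic}) makes this $o(1)$ and summable, which is the sole place the lower bound on $\rho$ enters. On $G_t$ the usual UCB comparison --- $a$ can beat $a^*$ only if its bonus alone already spans the gap, since optimism gives $\mu_{a^*,o}(t-1) \ge \mu_{a^*}$ --- forces $N_a(t-1) \le 4\rho^2 \log t/\Delta^2$ whenever $a$ is exploited, and hence $N_a(t) = O(\log t)$ and $N_{a^*}(t) \ge t - O(\log t)$. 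The latter makes $a^*$'s bonus vanish and its empirical mean concentrate, so $\mu_{a^*,o}(t-1) \to \mu_{a^*}$ in probability.

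The main obstacle is that the cumulative bound $N_a(t) = O(\log t)$ does not by itself yield the \emph{per-round} conclusion $\P[a_t = a] \to 0$ demanded by the definition, since a sum that grows like $\log t$ is compatible with occasional spikes in the per-round probability. I would close this gap using the sparsity of the pull times implied by $G_t$: the $k$-th exploitation pull of $a$ cannot occur before time $\exp\bigl((k-1)\Delta^2/(4\rho^2)\bigr)$, so these pulls are exponentially spaced and any dyadic block $(2^j, 2^{j+1}]$ contains only $O(j)$ of them, whence the per-round exploitation probability is $O(\log t / t) \to 0$. An alternative, cleaner route --- available whenever the forced exploration is chosen to grow fast enough that $\sum_{s \le t}\varepsilon_s \gg \log t$ (in the spirit of the fast exploration used for \autoref{thm:reweighted}) --- is to note that exploration alone then gives $N_a(t) \gtrsim \sum_{s \le t}\varepsilon_s \gg \log t$, so the bonus $\rho\sqrt{\log t/N_a(t-1)} \to 0$ and $a$'s index falls below $\mu_{a^*}$ with high probability; the per-round bound follows from concentration exactly as in \autoref{lem:epsgreedynoregret}. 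Everything else is the standard UCB argument of \cite{auer2002finite,kalvit2021closer}, so this per-round upgrade is where the real work lies.
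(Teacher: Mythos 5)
Your route is essentially the paper's: its proof is a short sketch asserting that Optimistic $\varepsilon$-Greedy coincides with UCB up to vanishing forced exploration, that the confidence bands remain valid, and that the logarithmic bound on suboptimal exploitation therefore carries over. Your confidence-event construction and gap comparison fill in that sketch, and you correctly flag something the paper glosses over: the paper's definition of no-regret is the \emph{per-round} statement $\P[a_t = a] \to 0$, while the classical UCB analysis only delivers the cumulative bound $\E[N_a(t)] = O(\log t)$; the sketch's closing sentence ("this amounts to a sub-linearly growing probability") silently conflates the two. Spotting that mismatch is the most valuable part of your write-up. Your observation that the lower bound on $\rho$ is only needed to make the confidence failures $o(1)$ is also consistent with how the paper uses $\rho_0$ in \autoref{thm:optimistic}.

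The problem is that your repair does not actually close the gap. The exponential-spacing argument shows that, on the good events, the number of exploitation pulls of $a$ in a dyadic block $(2^j, 2^{j+1}]$ is $O(j)$; summing $\P[a_t = a]$ over the block therefore bounds the \emph{block-averaged} probability by $O(j\,2^{-j})$. That is a Ces\`aro statement, not the pointwise claim $\P[a_t = a] \to 0$: nothing so far rules out a sparse sequence of rounds at which the pull probability stays bounded away from zero. Worse, the event you would need to kill, $\{N_a(t-1) \le 4\rho^2 \log t/\Delta^2\}$, does \emph{not} have vanishing probability for UCB-type indices --- the suboptimal arm is pulled precisely often enough that $N_a(t)/\log t$ stabilizes near $\rho^2/\Delta^2$, which sits \emph{below} your threshold, so the arm's index hovers at the critical level $\mu_{a^*}$ rather than falling strictly under it. Establishing that the index nevertheless loses the comparison in each round with probability tending to one requires the finer trajectory analysis of $N_a(t)$ in \cite{kalvit2021closer}, which is what the paper leans on for \autoref{thm:optimistic}. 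Finally, your alternative route cannot substitute for this in general: the proposition allows any $\varepsilon_t \to 0$, including summable sequences with $\sum_{s \le t} \varepsilon_s = o(\log t)$, so forced exploration need not dominate the bonus.
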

\begin{proof}[Proof Sketch]
This proof is similar to the proof that Upper Confidence Bound does not incur regret (see, e.g., \cite{auer2002using}). The only difference between the Upper Confidence Bound algorithm and the Optimistic $\varepsilon$-Greedy is exploration that vanishes in the limit. It remains the case that the confidence bands are valid, and hence, there is a logarithmic upper bound for the probability that the bandit algorithm chooses a sub-optimal action in an exploitation step. As in the original proof of the Upper Confidence Band, this amounts to a sub-linearly growing probability of choosing a sub-optimal action.
\end{proof}
Finally, we give evidence that Reweighted $\varepsilon$-Greedy does not incur regret, first with a theoretical argument for a restricted class of instances and then for more general settings using an experiment.
\begin{proposition}\label{prop:reweightednoregret}
Let $\varepsilon_t = t^{-\frac12 + \kappa}$. Reweighted $\varepsilon$-Greedy does not incur regret if the best arm is deterministic and centered.
\end{proposition}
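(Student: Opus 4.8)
The plan is to show that every suboptimal action is selected only in the vanishing exploration steps, by proving that its reweighted estimate is eventually negative while the optimal action's estimate stays identically zero. Since the best action $a^\star$ is deterministic and centered, its reward is always $0$, so $\mu_{a^\star,r}(t)=0$ for every $t$. I fix a suboptimal action $a$ with $\mu_a<0$ (the case where $a$ is itself deterministic is immediate from $\sigma_a^2=0$, so I assume $\sigma_a^2>0$). Because $\varepsilon_t\to 0$, and in an exploitation step $a$ is chosen only when $\mu_{a,r}(t-1)\ge\mu_{a^\star,r}(t-1)=0$, it suffices to establish
\[
\P\bigl[S_a(t)\ge 0\bigr]\xrightarrow{t\to\infty}0,\qquad S_a(t):=\sum_{\substack{s\le t\\ a_s=a}}\frac{r_s}{\sqrt{\pi_{as}}}=N_a(t)\,\mu_{a,r}(t),
\]
after which a union bound over the finitely many suboptimal actions gives $\P[a_t=a]\to 0$.

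Next I would set up a martingale decomposition with respect to the natural filtration $(\mathcal F_t)$. Writing $S_a(t)=M_t+D_t$ with predictable compensator $D_t=\sum_{s\le t}\mu_a\sqrt{\pi_{as}}$ and martingale $M_t=S_a(t)-D_t$, the key computation --- and the reason the reweighting preserves the correct scaling --- is that the conditional variance of each increment is bounded independently of $\pi_{as}$:
\[
\E\bigl[(\Delta S_s-\mu_a\sqrt{\pi_{as}})^2\mid \mathcal F_{s-1}\bigr]=\sigma_a^2+\mu_a^2(1-\pi_{as})\le \sigma_a^2+\mu_a^2,
\]
since the inflation factor $\pi_{as}^{-1/2}$ on the reward is exactly offset by the probability $\pi_{as}$ of actually incurring it. Hence the quadratic variation obeys $\langle M\rangle_t\le(\sigma_a^2+\mu_a^2)\,t$, so $M_t$ fluctuates only at scale $t^{1/2}$ up to logarithmic factors.

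I would then exploit the exploration floor $\pi_{as}\ge\varepsilon_s/|A|$ to drive the drift: since $\mu_a<0$ and $\varepsilon_s=s^{-1/2+\kappa}$,
\[
D_t\le \frac{\mu_a}{\sqrt{|A|}}\sum_{s=1}^t\sqrt{\varepsilon_s}=\frac{\mu_a}{\sqrt{|A|}}\sum_{s=1}^t s^{-\frac14+\frac{\kappa}{2}}\le -c\,|\mu_a|\,t^{\frac34+\frac{\kappa}{2}}
\]
for some $c>0$ and all large $t$, so that $\{S_a(t)\ge 0\}\subseteq\{M_t\ge c|\mu_a|\,t^{3/4+\kappa/2}\}$ almost surely. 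Applying a Freedman-type tail bound with $\langle M\rangle_t\le Vt$ (where $V=\sigma_a^2+\mu_a^2$), increment scale $b\lesssim t^{1/4-\kappa/2}$, and threshold $\lambda_t\sim t^{3/4+\kappa/2}$ gives $\lambda_t^2/(Vt+b\lambda_t)\gtrsim t^{1/2+\kappa}$, whence
\[
\P[M_t\ge\lambda_t]\le\exp\!\left(-\frac{\lambda_t^2}{2\left(Vt+b\lambda_t\right)}\right)\le \exp\!\left(-c'\,t^{\frac12+\kappa}\right)\to 0,
\]
which delivers the required $\P[S_a(t)\ge0]\to0$. The drift exponent $\tfrac34+\tfrac{\kappa}{2}$ strictly exceeds the martingale exponent $\tfrac12$, so the negative drift always wins.

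I expect the one genuinely delicate step to be the concentration bound: the reweighting makes individual increments of $M_t$ \emph{unbounded}, growing like $\pi_{as}^{-1/2}\lesssim s^{1/4-\kappa/2}$, so a plain Azuma--Hoeffding estimate is unavailable. The argument must instead be run through a variance-sensitive (Freedman/Bernstein) inequality combined with a truncation of the sub-Gaussian rewards at a slowly growing threshold, checking that both the truncation error and the $b\lambda_t$ term remain $O(t)$ and are therefore dominated by $\lambda_t^2\sim t^{3/2+\kappa}$. Everything else --- the reduction, the identification of the compensator, the constant-order conditional variance, and the polynomial drift lower bound --- is routine once the scaling cancellation in the second step is in hand.
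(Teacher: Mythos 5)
Your argument is correct, but it takes a genuinely different route from the paper's. The paper conditions on the last time the sufficient statistic $Y_t$ crossed zero, argues that the time since that crossing grows, notes that the post-crossing increments are (conditionally) i.i.d.\ with negative bias $\mu_a/\sqrt{1-\varepsilon_t/2}$ and bounded standard deviation, and closes with a Chernoff bound on that conditionally independent sum. You instead avoid any conditioning on crossing times: you write $S_a(t)=M_t+D_t$ with the predictable compensator $D_t=\sum_s \mu_a\sqrt{\pi_{as}}$, observe the key cancellation that the conditional second moment of a reweighted increment is $\sigma_a^2+\mu_a^2$ regardless of $\pi_{as}$, and then beat the $O(\sqrt{t})$ martingale fluctuation with the worst-case drift $-c\lvert\mu_a\rvert\, t^{3/4+\kappa/2}$ obtained purely from the exploration floor $\pi_{as}\ge\varepsilon_s/\lvert A\rvert$. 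Each approach buys something: the paper's conditioning exploits the much stronger drift that kicks in once the walk is on the positive side (choice probability $\approx 1$ rather than $\approx\varepsilon_s$), but the ``last crossing time'' is not a stopping time and the claimed conditional independence of increments is the delicate, under-justified step of that sketch; your decomposition is more robust and makes the role of the exploration rate explicit (the exponent $\tfrac34+\tfrac{\kappa}{2}>\tfrac12$ is exactly where $\varepsilon_t=t^{-1/2+\kappa}$ enters), at the cost of needing a Freedman/Bernstein inequality with truncation of the sub-Gaussian rewards, which you correctly identify as the one step requiring care. Two small points to tidy up: the truncation at a threshold growing like $\sqrt{\log t}$ makes $b\lambda_t$ of order $t\sqrt{\log t}$ rather than $O(t)$, which still yields an exponent of order $t^{1/2+\kappa}/\sqrt{\log t}\to\infty$; and you should say a word about the initial rounds in which $N_a(t)=0$ and $\mu_{a,r}$ is undefined, which the paper also elides.
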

\begin{proof}[Proof Sketch]
We show that reweighted $\varepsilon$-Greedy does not incur regret for one deterministic, centered and one other arm of negative expected payoff. Using a union bound, this implies no-regretness for any number of negative-expectation arms against a deterministic centered arm.

Let $a, a' \in A$ be such that $0 = \mu_{a'} > \mu_{a}$. The statement is trivial if $F_{a}$ is deterministic. 

If $F_{a}$ is non-deterministic, observe that, as in the the proof of \autoref{thm:reweighted}
\[
Y_t = \sum_{1 \le t' \le t} \frac{r_{t'}}{\sqrt{\pi_t}}
\]
is a sufficient statistic for Reweighted $\varepsilon$-Greedy. It is sufficient to show that $\P[Y_t < 0] \to 1$ as $t \to \infty$. (As in \autoref{thm:epsgreedy}, $\P[Y_t = 0] \to 0$ as $t \to \infty$.) To show $\P[Y_t < 0] \to 1$ as $t \to \infty$, we condition on $Y_t$'s last crossing time $\tau_t$ through zero. As in the proof of \autoref{thm:reweighted}, we observe that $t - \tau_t$ grows large. We will show that conditional on a large $\tau_t$, the probability that $Y_t > 0$ is small, which implies the claim.

To this end, we observe the increment has a bias of $\frac{\mu_a}{\sqrt{1-\varepsilon_t/2}}$ and a standard deviation of 
\[
\frac{\sqrt{\var (F_a)}}{\sqrt{1-\frac{\varepsilon_t}{2}}},
\]
and that increments are independent (conditioning on the last crossing time). An application of a Chernoff bound shows that the probability that the walk is positive is small.
\end{proof}
We supplement this theoretical result with a simulation beyond the conditions of \autoref{fig:reweighted_noregret}. Figure \autoref{fig:reweighted_noregret} shows that even with a non-deterministic, non-centered action of highest expected payoff, Reweighted $\varepsilon$-Greedy appears not to incur regret.
\begin{figure}[ht]
\centering
\begin{subfigure}{.34\textwidth}
\centering
\includegraphics[width=.95\linewidth]{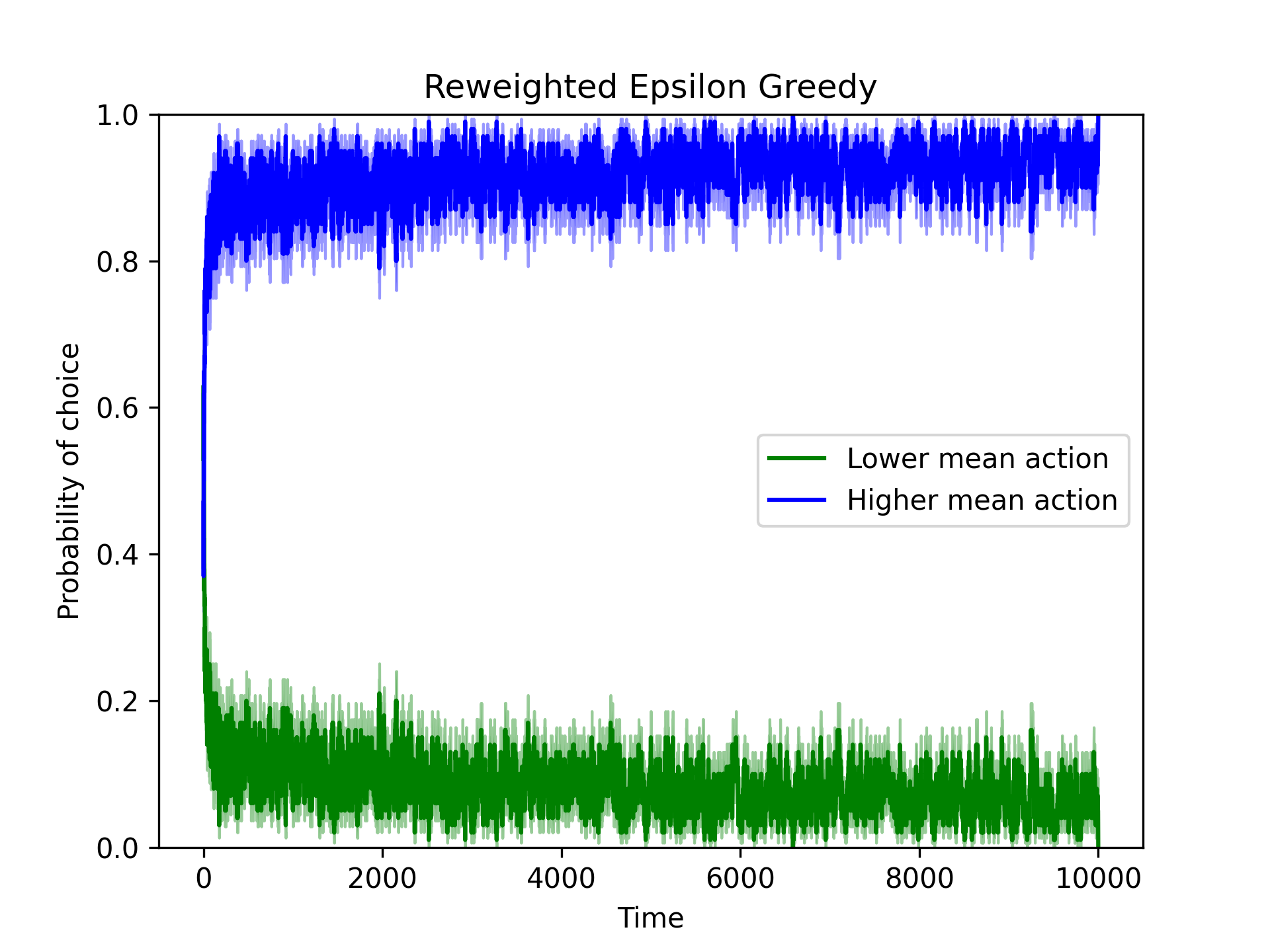}
\caption{Reweighted $\varepsilon$-Greedy for two actions, with a strictly better risky action.}
\label{fig:rew_three_arms}
\end{subfigure}%
\begin{subfigure}{.34\textwidth}
\includegraphics[width=.95\linewidth]{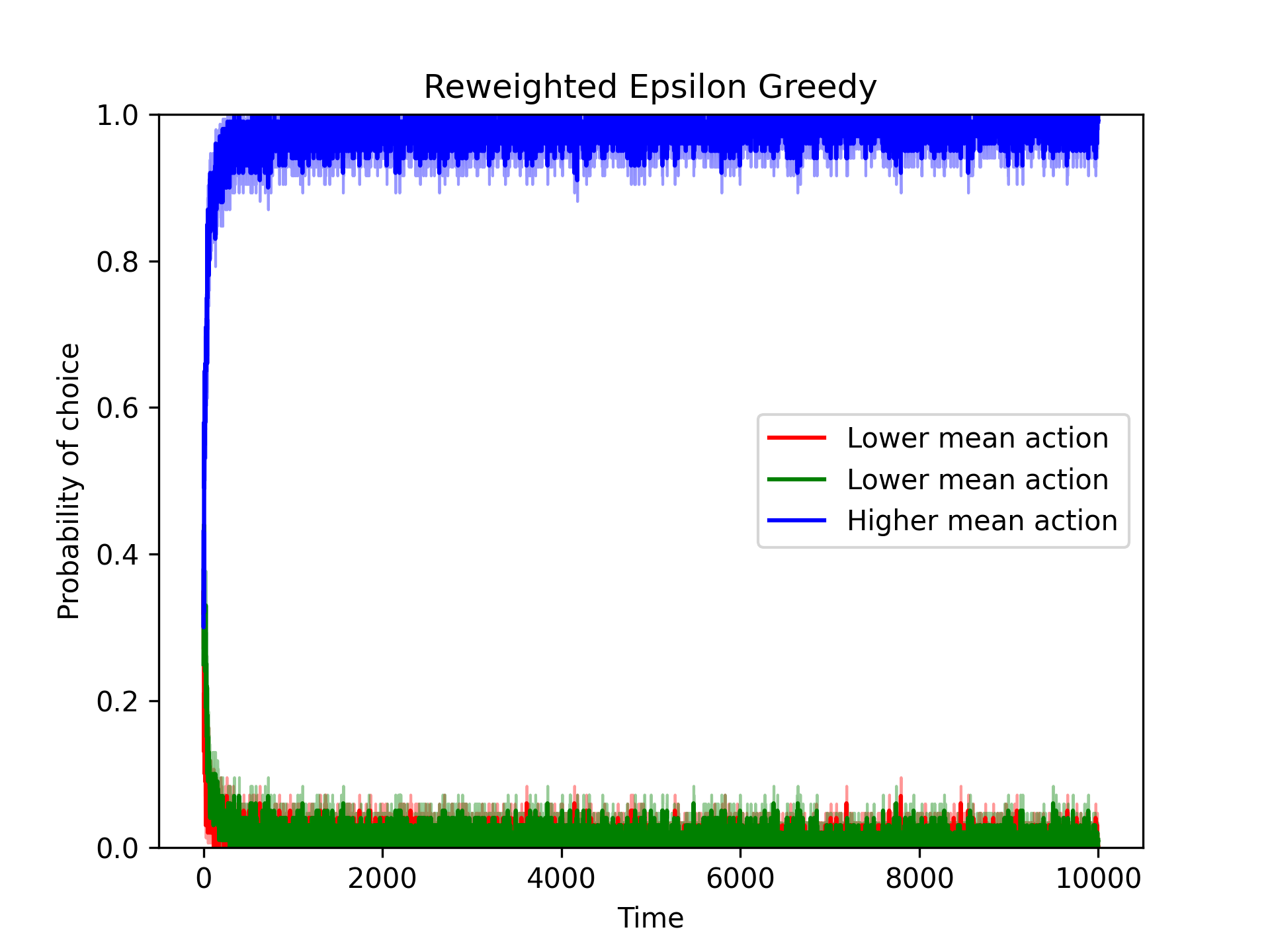}
\caption{Reweighted $\varepsilon$-Greedy for three actions, with a strictly better risky action.}
\label{fig:better_risky_rew}
\end{subfigure}
\caption{Plots illustrating that Reweighted $\varepsilon$-Greedy is no-regret. In a), the lower mean action has distribution $\Exp(1)$, the higher mean action has payoff distribution $\Exp(2) + N(0,1)$. In b) the lower mean actions have reward distribution $\mathbb{1}_{\{0.35\}}$ and $U[0.25, 0.75]$ while the higher mean action has payoff distribution $U[-1, 3]$. We set $\varepsilon_t = t^{-0.49}$. Note that the bands around the plot line are $90\%$ confidence intervals, with 100 independent runs.}
\label{fig:reweighted_noregret}
\end{figure}

\section{Additional simulations}\label{sec:add_sim}
Both $\varepsilon$-Greedy's and Reweighted $\varepsilon$-Greedy's reward estimates are biased (compare \cite[Section 11.2]{Lattimore2020}). It is natural to ask whether debiasing the reward estimates can address emergent risk preferences. Figure \ref{fig:unbiased} simulates a $\varepsilon$-Greedy with the unbiased sufficient statistic
\[
\mu_{a,d} (t) = \frac{1}{N_a(t)}\sum_{\substack{1 \le t' \le t \\ a_t =a}} \frac{r_t}{\pi_t}.
\]
The debiasing leads to risk \emph{affinity} as opposed to risk neutrality. One intuition for this is that the division by the square root of the choice probability in Reweighted $\varepsilon$-Greedy led to a correction that makes the algorithm exactly risk-neutral. Debiasing, which divides by the choice probability, a smaller number, leads to a stronger correction in the direction of risk affinity.
\begin{figure}[ht]
\centering
\includegraphics[width=.34\linewidth]{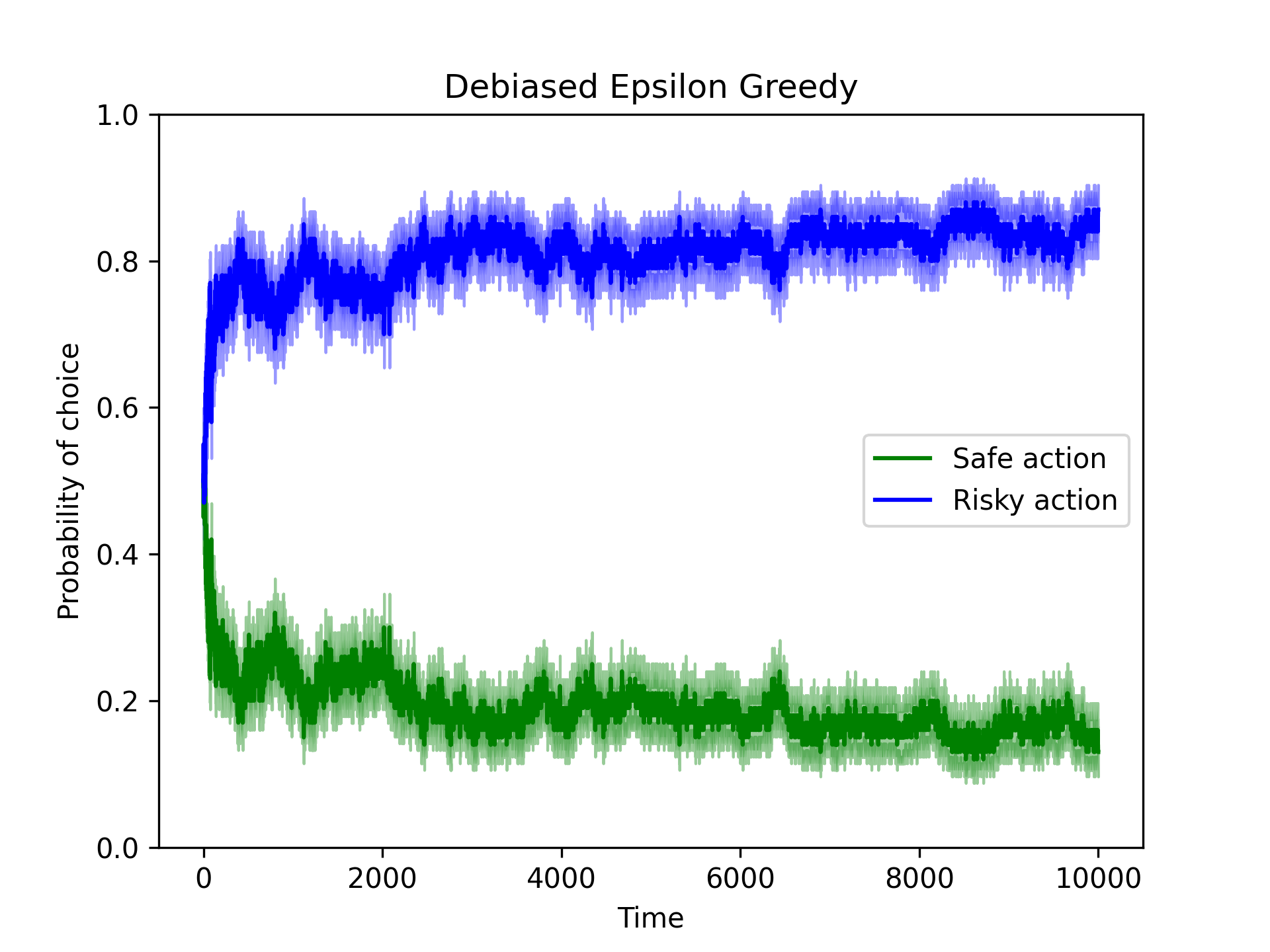}
\caption{$\varepsilon$-Greedy with a debiased reward estimate. The safe action has reward distribution $\mathbb{1}_{\{0\}}$ while the risky action has reward distribution $U[-1, 1]$. Note that the bands around the plot line are $90\%$ confidence intervals, with 100 independent runs.}
\label{fig:unbiased}
\end{figure}
\end{document}